\renewcommand{\cite}{\citep}
\begin{document} 

\title{Rebuilding Factorized Information Criterion:
Asymptotically Accurate Marginal Likelihood}

\author[1,2]{Kohei Hayashi}
\author[3]{Shin-ichi Maeda}
\author[4]{Ryohei Fujimaki}
\affil[1]{Global Research Center for Big Data Mathematics, National Institute of Informatics}
\affil[2]{Kawarabayashi Large Graph Project, ERATO, JST}
\affil[3]{Graduate School of Informatics,
Kyoto University}
\affil[4]{Big Data Analytics, NEC Knowledge Discovery Laboratories}

\maketitle

\begin{abstract}
Factorized information criterion (FIC) is a recently developed
approximation technique for the marginal log-likelihood, which
provides an automatic model selection framework for a few latent
variable models (LVMs) with tractable inference algorithms.  This
paper reconsiders FIC and fills theoretical gaps of previous FIC
studies. First, we reveal the core idea of FIC that allows 
generalization for a broader class of LVMs, including continuous
LVMs, in contrast to previous FICs, which are applicable only to binary
LVMs. Second, we investigate the model selection mechanism of the
generalized FIC. 
Our analysis provides a formal justification of FIC as a model 
selection criterion for LVMs and also a systematic procedure for pruning
redundant latent variables that have been removed heuristically
in previous studies.  
Third, we provide an interpretation of FIC as a variational 
free energy and uncover a few previously-unknown their relationships.
A demonstrative study on Bayesian principal component analysis
is provided and numerical experiments support our
theoretical results.
\end{abstract}

%%% Local Variables:
%%% mode: latex
%%% TeX-master: "fabcont.tex"
%%% End:

\section{Introduction}\label{sec:introduction}

The marginal log-likelihood is a key concept of Bayesian model
identification of latent variable models~(LVMs), such as mixture models
(MMs), probabilistic principal component analysis, and hidden Markov
models (HMMs).  Determination of dimensionality of latent variables is
an essential task to uncover hidden structures behind the observed
data as well as to mitigate overfitting.  In general, LVMs are
\emph{singular}~(i.e., mapping between parameters and probabilistic
models is not one-to-one) and such classical information criteria
based on the regularity assumption as the Bayesian information criterion
(BIC)~\cite{schwarz78} are no longer justified.  Since exact evaluation
of the marginal log-likelihood is often not available, approximation
techniques have been developed using sampling (i.e., Markov Chain
Monte Carlo methods~(MCMCs)~\cite{hastings70}), a variational lower
bound (i.e., the variational Bayes methods~(VB)~\cite{attias99,jordan99}),
or algebraic geometry (i.e., the widely applicable
BIC~(WBIC)~\cite{watanabe13}).  However, model selection using these
methods typically requires heavy computational cost (e.g., a large
number of MCMC sampling in a high-dimensional space, an outer loop for
VB/WBIC.)

In the last few years, a new approximation technique and an inference
method, factorized information criterion (FIC) and factorized
asymptotic Bayesian inference (FAB), have been developed for some
binary LVMs~\cite{fujimaki12a,fujimaki12b, hayashi13,eto14}.  Unlike
existing methods which evaluate approximated marginal log-likelihoods
calculated for each latent variable dimensionality~(and therefore need
an outer loop for model selection), FAB finds an effective
dimensionality via an EM-style alternating optimization procedure.

For example, let us consider a $K$-component MM 
for $N$ observations $\X^\T=(\x_1,\dots,\x_N)$ with one-of-$K$ coding latent variables
$\Z^\T=(\z_1,\dots,\z_N)$, mixing coefficients $\vbeta=(\beta_1,\dots,\beta_K)$,
and $D_{\Local}$-dimensional component-wise parameters
$\Local=\{\local_1,\dots,\local_K\}$. 
By using Laplace’s method to the marginalization of the
log-likelihood, FIC of MMs~\cite{fujimaki12a} is derived by
\begin{align}\label{eq:FIC-MM}
&\FIC{MM}(K) \equiv
\max_q \E_q\left[\log p(\X,\Z\mid\ML{\vbeta},\ML{\Local},K) \right. \notag
\\
&\qquad\left.-\sum_k\frac{D_{\local_k}}{2}\log\frac{\sum_nz_{nk}}{N}\right] 
+ H(q) - \frac{D_{\Both}}{2}\log N,
\end{align}
where $q$ is the distribution of $\Z$, $\ML{\vbeta}$ and $\ML{\Local}$
are the maximum joint-likelihood estimators (MJLEs)\footnote{%
Note that MJLE is not equivalent to maximum \textit{a posteriori} estimator (MAP). MJLE is given by
$\argmax_\Theta p(\X, \Z | \Theta)$
and the MAP is given by
$\argmax_\Theta p(\X, \Z | \Theta) p(\Theta)$.%
},
$D_{\Both}=D_{\Local} + K - 1$ is the total dimension of $\Local$ and
$\vbeta$, and $H(q)$ is the entropy of $q$.  A key characteristic of
FIC can be observed in the second term of Eq.~\eqref{eq:FIC-MM}, which
gives the penalty in terms of model complexity.  As we can see, the
penalty term decreases when $\sum_n z_{nk}$---the number of effective
samples of the $k$-th component---is small, i.e., $\Z$ is degenerated.
Therefore, through the optimization of $q$, the degenerate dimension
is automatically pruned until a non-degenerated $\Z $ is found.
This mechanism makes FAB a one-pass model selection algorithm and
computationally more attractive than the other methods.  The validity
of the penalty term has been confirmed for other binary LVMs, e.g.,
HMMs~\cite{fujimaki12b}, latent feature models~\cite{hayashi13}, and
mixture of experts~\cite{eto14}.

%%%
Despite FAB's practical success compared with BIC and VB, it is
unclear that what conditions are actually necessary to guarantee that
FAB yields the true latent variable dimensionality.  In addition, the
generalization of FIC for non-binary LVMs still remains an
important open issue.  In case that $\Z$ takes negative and/or
continuous values, $\sum_n z_{nk}$ is no longer interpretable as the
number of effective samples, and we loose the clue for finding the
redundant dimension of $\Z$.
%$\log\sum_n z_{nk}$ cannot be defined.
%

This paper proposes generalized FIC (gFIC), given by
\begin{align}\label{eq:GFIC}
\gFIC(K) \equiv
&\E_{q^*}[\ajl(\Z, \ML{\Both}, K)] + H(q),
\\
\ajl(\Z, \Both, K) =& \log p(\X,\Z\mid\Both,K) -
 \frac{1}{2}\log|\F_{\Local}| - \frac{D_{\Both}}{2}\log N. \notag
\end{align}
Here, $q^*(\Z)\equiv p(\Z\mid\X,K)$ is the marginal posterior and
$\F_{\ML{\Local}}$ is the Hessian matrix of $-\log p(\X,\Z\mid
\Both,\model)/N$ with respect to $\Local$.  In gFIC, the penalty term
is given by the volume of the (empirical) Fisher information matrix.
It naturally penalizes model complexity even when the latent variable
$\Z$ takes negative and/or continuous values.  Accordingly, gFIC is
applicable to a broader class of LVMs, such as Bayesian principal
component analysis (BPCA) ~\cite{bishop98}.

Furthermore, we prove that FAB automatically prunes redundant
dimensionality along with optimizing $q$, and gFIC for the optimized
$q$ asymptotically converges to the marginal log-likelihood with a
constant order error under some reasonable assumptions. This
justifies gFIC as a model selection criterion for LVMs and further a
natural one-pass model ``pruning'' procedure is derived, which is
performed heuristically in previous FIC studies.  We also provide an
interpretation of gFIC as a variational free energy and uncover
a few previously-unknown their relationships.  
This interpretation gives formal conditions for justifying
that model selection by the VB marginal log-likelihood.
 
Finally, we demonstrate the validity of gFIC by applying it to BPCA.
The experimental results agree with to the theoretical properties of
gFIC.

%\begin{table}[tb]
%  \centering
%  \begin{tabular}{cccccc}
%\hline
%    & GFIC & FIC & BIC & WBIC & VB\\
%\hline\hline
%%Concistency & \good & \good & \good & \good & \bad\\
%Optimization & \good & \good & \good & \bad & \good\\
%\hline
%&\multicolumn{5}{l}{Consistency to the marginal likelihood}\\
%\hline
%regular models & \good & \good & \good & \good & \bad\\
%binary LVMs & \good & \good & \bad & \good & \bad\\
%general LVMs & \good & \bad & \bad & \good & \bad\\
%general models& \bad & \bad & \bad & \good & \bad\\
%\hline
%  \end{tabular}
%  \caption{Comparison with other Bayesian approximation methods.}
%  \label{tab:comp}
%\end{table}

%%% Local Variables:
%%% mode: latex
%%% TeX-master: "fabcont.tex"
%%% End:

\section{LVMs and Degeneration}\label{sec:lvms}

We first define the class of LVMs we deal with in this paper. Here, we
consider LVMs that have $K$-dimensional latent variables $\z_n$
(including the MMs in the previous section), but now $\z_n$ can take
not only binary but also real values. Given $\X$ and a model family
(e.g., MMs), our goal is to determine $K$ and we refer to this as a
\emph{model}. Note that we sometimes omit the notation $K$ for the
sake of brevity, if it is obvious from the context.

%%%
The LVMs have $D_{\Local}$-dimensional local parameters
$\Local=\{\local_1,\dots,\local_K\}$ and $D_\Global$-dimensional
global parameters $\Global$, which can include hyperparameters of the
prior of $\Z$.  We abbreviate them as $\Both=\{\Global,\Local\}$ and
assume that the dimension $D_{\Both}=D_{\Global}+D_{\Local}$ is
finite. Then, we define the joint probability: 
$p(\X,\Z,\Both)=p(\X\mid\Z,\Both)p(\Z\mid\Both)p(\Both)$
%%
%\begin{align*}
%  p(\X,\Z,\Both)=p(\X\mid\Z,\Both)p(\Z\mid\Both)p(\Both),
%\end{align*}
%%
where $\log p(\X,\Z\mid \Both)$ is twice differentiable at
$\Both\in\pspace$ and let $\F_{\Both} \equiv$
%$\F_{\valpha}\equiv
%-[\frac{\partial^2}{\partial\alpha_i\partial\alpha_j}\log
%p(\X,\Z\mid\Both)/N]$ be the Hessian of $-\log p(\X,\Z\mid
%\Both,\model)/N$. 
%
\begin{align*}
  \begin{pmatrix}
    \F_{\Global}&\F_{\Global,\Local}\\
    \F_{\Global,\Local}^\T&\F_{\Local}
  \end{pmatrix}
=-
%  \begin{pmatrix}
%    \frac{\partial^2}{\partial\Global\partial\Global^\T}&\frac{\partial^2}{\partial\Global\partial\Local^\T}\\
%    \frac{\partial^2}{\partial\Local\partial\Global^\T}&\frac{\partial^2}{\partial\Local\partial\Local^\T}
%  \end{pmatrix}
  \begin{pmatrix}
    \frac{\partial}{\partial\Global^\T}\\
    \frac{\partial}{\partial\Local^\T}
  \end{pmatrix}
  \begin{pmatrix}
    \frac{\partial}{\partial\Global}
    \frac{\partial}{\partial\Local}
  \end{pmatrix}
  \frac{\log p(\X,\Z\mid \Both)}{N}.
\end{align*}
Note that the MJLE $\ML{\Both}\equiv\argmax_\Both \log
p(\X,\Z\mid\Both)$ depends on $\Z$ (and $\X$).  In addition, $\log
p(\X,\Z\mid\Both)$ can have multiple maximizers, and $\ML{\Both}$
could be a set of solutions.

%%%
Model redundancy is a notable property of LVMs.  Because the
latent variable $\Z$ is unobservable, the pair $(\Z, \Both)$ is not
necessarily determined uniquely for a given $\X$. In other words,
there could be pairs $(\Z, \Both)$ and
$(\tilde{\Z},\tilde{\Both})$, whose likelihoods have
the same value, i.e.,
$p(\X,\Z\mid\Both,\model)=p(\X,\tilde{\Z}\mid\tilde{\Both},\model)$.
Previous FIC studies address this redundancy by introducing a
variational representation that enables treating $\Z$ as fixed, as
we explain in the next section. However, even if $\Z$ is fixed, the redundancy
still remains, namely, the case in which $\Z$ is
``degenerated,'' and there exists an equivalent likelihood with a
smaller model $K'<K$:
\begin{align}\label{eq:equivalence}
  p(\X,\Z\mid\Both,K) = p(\X,\tilde{\Z}_{K'}\mid\tilde{\Both}_{K'}, K').
\end{align}
In this case, $K$ is overcomplete for $\Z$, and $\Z$ lies on the
subspace of the model $K$. As a simple example, let us consider a
three-component MM for which $\Z=(\z,\1-\z,\0)$. In this case,
$\local_3$ is unidentifiable, because the third component is
completely unused, and the $K'=2$-component MM with
$\tilde{\Z}_{2}\equiv(\z,\1-\z)$ and $\tilde{\Both}_2\equiv(\Global,
(\local_1,\local_2))$ satisfies equivalence
relation~\eqref{eq:equivalence}. The notion of degeneration is defined
formally as follows.
\begin{definition}
  Given $\X$ and $K$, $\Z$ is degenerated if there are multiple MJLEs
  and any $\F_{\ML{\Both}}$ of the MJLEs are not positive
  definite. Similarly, $p(\Z)$ is degenerated in distribution, if
  $\E_p[\F_{\ML{\Both}}]$ are not positive definite. Let
  $\kappa(\Z)\equiv\rank(\F_{\ML{\Both}})$ and
  $\kappa(p)\equiv\rank(\E_p[\F_{\ML{\Both}}])$.
\end{definition}
%%%
The idea of degeneration is conceptually
understandable as an analogous of linear algebra. Namely, each
component of a model is a ``basis'', $\Z$ are ``coordinates'', and
$\kappa(\Z)$ is the number of necessary components to represent $\X$,
i.e., the ``rank'' of $\X$ in terms of the model family. The
degeneration of $\Z$ is then the same idea of the ``degeneration'' in
linear algebra, i.e., the number of components is too many and $\Both$
is not uniquely determined even if $\Z$ is fixed.

%%%
As discussed later, given a degenerated $\Z$ where $\kappa(\Z)=K'$,
finding the equivalent parameters $\tilde{\Z}_{K'}$ and
$\tilde{\Both}_{K'}$ that satisfy Eq.~\eqref{eq:equivalence} is an
important task. In order to analyze this, we assume \assum{switch}:
for any degenerated $\Z$ under a model $K\geq 2$ and $K'<K$, there
exists a continuous onto mapping
$(\Z,\Both)\to(\tilde{\Z}_{K'},\tilde{\Both}_{K'})$ that satisfies
Eq.~\eqref{eq:equivalence}, and $\tilde{\Z}_{K'}$ is not degenerated.
Note that, if $\pspace$ is a subspace of $\R^{D_{\Both}}$, a linear
projection $\V:\R^{D_{\Both}}\mapsto \R^{D_{\Both_{K'}}}$ satisfies
\ref{asm:switch} where $\V$ is the top-$D_{\Both_{K'}}$ eigenvectors
of $\F_{\Both}$. This is verified easily by the fact that, by using
the chain rule, $\F_{\tilde{\Both}_{K'}}=\V\F_{\ML{\Both}}\V^\T$,
which is a diagonal matrix whose elements are positive eigenvalues.
Therefore, $\F_{\tilde{\Both}_{K'}}$ is positive definite and $\tilde{\Z}_{K'}$ is not degenerated.

%%%
Let us further introduce a few assumptions required to show the
asymptotic properties of gFIC. Suppose
\assum{nindep} the joint distribution is mutually independent in sample-wise,
\begin{align}
%  p(\X, \Z, \Both\mid \model)=p(\Z,\Both\mid\model)\prod_n p(\x_n\mid \z_n, \Both, \model)
  p(\X, \Z\mid\Both, \model)=\prod_n p(\x_n,\z_n\mid \Both, \model),
\end{align}
and \assum{flat} $\log p(\Both\mid\model)$ is constant, i.e.,
$\lim_{N\to\infty}\log p(\Both\mid\model)/N=0$. 
In addition, \assum{invariant} $p(\Both\mid K)$ is continuous, not
improper, and its support $\pspace$ is compact and the whole space.
Note that for almost all $\Z$, we expect that $\ML{\Both}\in\pspace$
is uniquely determined and $\F_{\ML{\Both}}$ is positive definite,
i.e., \assum{regularity} if $\Z$ is not degenerated, then $\log p(\X,\Z\mid
\Both,\model)$ is concave and $\det|\F_{\ML{\Both}}|<\infty$.
%
%Finally, to eliminate ill-conditioned priors that always generate
%degenerated $\Z$, suppose
%%
%\assum{degen} $p(\Z\mid K)=\int p(\Z\mid\Both,K)p(\Both\mid K)\d\Both$
%is not degenerated in distribution.

\subsection{Examples of the LVM Class} \label{sec:bpca}

The above definition covers a broad class of LVMs. Here, we show that, as examples,
MMs and BPCA are included in that class.
Note that \ref{asm:nindep} does not allow correlation among samples
and analysis of cases with sample correlation (e.g. time series
models) remains as an open problem.

\paragraph{MMs}

In the same notation used in Section~1, the joint likelihood is given
by $p(\X,\Z| \Both) =
\prod_n\prod_k\{\beta_kp_k(\x_n|\local_k)\}^{z_{nk}}$
%%
%\begin{align}
%  p(\X,\Z\mid \Both) = \prod_n\prod_k\{\beta_kp_k(\x_n\mid\local_k)\}^{z_{nk}},
%\end{align}
%%
where $p_k$ is the density of component $k$. If
$\local_1,\dots,\local_K$ have no overlap, $\F_{\Local}$ is the
block-diagonal matrix whose block is given by
$\F_{\local_k}=-\sum_n\nabla\nabla \log p_k(\x_n| \local_k)z_{nk}/N$. This shows that the MM is degenerated, when more than one
column of $\Z$ is filled by zero.  For that case, removing such
columns and corresponding $\local_k$ suffices as
$\tilde{\Z}_{K'}$ and $\tilde{\Both}_{K'}$ in \ref{asm:switch}. Note
that if $p_k$ is an exponential-family distribution
$\exp(\x_n^\T\local_k - \psi(\local_k))$, $-\nabla \nabla \log
p_k(\x_n| \local_k)=\nabla \nabla\psi(\local_k)=\C$ does not depend
on $n$ and $\gFIC{}$ recovers the original formulation of $\FIC{MM}$,
i.e.,
$\frac{1}{2}\log|\F_{\ML{\local_k}}|=\frac{1}{2}\log|\C(\frac{\sum_nz_{nk}}{N})|=\frac{D_{\local_k}}{2}\log\frac{\sum_nz_{nk}}{N}
+ \const$

\paragraph{BPCA}

Suppose $\X\in\R^{N\times D}$ is centerized, i.e., $\sum_n\x_n = \0$.
Then, the joint likelihood of $\X$ and $\Z\in\R^{N\times K}$ is given
by $p(\X,\Z| \Both) = \prod_n N(\x_n|\W\z_n, \frac{1}{\lambda}\I) N(\z_n|\0,\I)$, 
%\begin{align}
%  p(\X,\Z\mid \Both) &= \prod_n N(\x_n\mid\W\z_n, \frac{1}{\lambda}\I) N(\z_n\mid\0,\I), 
%\end{align}
%
where $\Local=\W=(\w_{\cdot 1},\dots,\w_{\cdot K})$ is a linear basis and
$\Global=\lambda$ is the reciprocal of the noise variance. 
Note that the original study of BPCA~\cite{bishop98} introduces the
additional priors $p(\W)=\prod_dN(\w_d|\0,\diag(\valpha^{-1}))$ and
$p(\lambda)=\mathrm{Gamma}(\lambda|a_{\lambda},b_{\lambda})$ and the
hyperprior
$p(\valpha)=\prod_k\mathrm{Gamma}(\alpha_k|a_{\valpha},b_{\valpha})$. In
this paper, however, we do not specify explicit forms of those priors
but just treat them as $O(1)$ term.

%%%
Since there is no second-order interaction between $\w_{i}$ and
$\w_{j\not= i}$, the Hessian $\F_{\Local}$ is a block-diagonal and
each block is given by $\frac{\lambda}{N}\Z^\T\Z$. The penalty term is
then given as
\begin{align}
  -\frac{1}{2}\log|\F_{\Local}|=-\frac{D}{2}(K\log\lambda +
  \log|\frac{1}{N}\Z^\T\Z|),
\end{align}
and $\Z$ is degenerated, if $\rank(\Z)<K$.
Suppose that $\Z$ is degenerated, let $K'=\rank(\Z) < K$, and let the
SVD be
$\Z=\U\diag(\vsigma)\V^{\T}=(\U_{K'},\0)\diag(\vsigma_{K'},\0)(\V_{K'},\0)^\T$,
where $\U_{K'}$ and $\V_{K'}$ are $K'$ non-zero singular vectors and
$\vsigma_{K'}$ is $K'$ non-zero singular values. From the definition
of $\F_{\Local}$, the projection $\V$ removes the degeneration of
$\Z$, i.e., by letting $\tilde{\Z}=\Z\V$ and $\tilde{\W}=\W\V$,
\begin{align*}
  \log p(\X, \Z\mid \Both, K)
  &=
  -\frac{\lambda}{2} \fnorm{\X - \Z\W^\T} -\frac{1}{2}\fnorm{\Z} + \const
\\
  &= 
  -\frac{\lambda}{2} \fnorm{\X - \tilde{\Z}\tilde{\W}^\T} -\frac{1}{2}\fnorm{\tilde{\Z}} + \const
\\
  &=
  \log p(\X, \tilde{\Z}_{K'}\mid\{\lambda,\tilde{\W}_{K'}\}, K').
\end{align*}
where $\fnorm{\A}=\sum_{ij}a_{ij}^2$ denotes the Frobenius norm,
$\tilde{\Z}_{K'}=\U_{K'}\diag(\vsigma_{K'})$, and
$\tilde{\W}_{K'}=\W\V_{K'}$. $\V$ transforms $K-K'$ redundant components to
$\0$-column vectors, and we can find the smaller model $K'$ by
removing the $\0$-column vectors from $\tilde{\W}$ and $\tilde{\Z}$, which
satisfies \ref{asm:switch}.

%\subsection{Model Selection via Marginal Likelihood}
%
%Our main goal is that, under fixing the model family, we would like to
%determine the best $K$ from the finite candidates
%$\mathcal{K}=\{1,\dots,K_{\max}\}$ in terms of the marginal likelihood
%%
%\begin{align}\label{eq:marginal}
%p(\X\mid K) = \int p(\X,\Z, \Both\mid K) \d\Z\d\Both.
%\end{align}
%%
%Note that, $p(\X\mid K)$ possibly takes the same value in different
%$K$. In that case, we would like to choose the simplest model, and we
%define the model selection problem as
%%
%\begin{align}\label{eq:modelselection}
%  \underset{k\in \mathcal{K}}{\mathrm{sargmax}}~p(\X\mid K)
%\end{align}
%%
%where $\mathrm{sargmax}$ is the operator to choose the smallest $K$ in
%the maximizers.
%%

%%% Local Variables:
%%% mode: latex
%%% TeX-master: "fabcont.tex"
%%% End:

% =======================================================
\section{Derivation of gFIC}

To obtain $p(\X\mid K)$, we need to marginalize out two variables:
$\Z$ and $\Both$. Let us consider the variational form for $\Z$,
written as
  \begin{align}
    \log p(\X| K) =& \E_q[\log p(\X,\Z| K)] + H(q) +\KL{q}{q^*} \label{eq:variational-KL}
\\
=&\E_{q^*}[\log p(\X,\Z| K)] + H(q^{*}),\label{eq:Free energy at q*}
  \end{align}
where $\KL{q}{p}=\int q(x)\log q(x)/p(x)\d x$ is the
Kullback-Leibler (KL) divergence. 

%%%
Variational representation~\eqref{eq:Free energy at q*} allows us to
consider the cases of whether $\Z$ is degenerated or not
separately. In particular, when $\Z\sim q^*(\Z)$ is not degenerated,
then \ref{asm:regularity} guarantees that $p(\X,\Z\mid K)$ is regular,
and standard asymptotic results such as Laplace's method are
applicable. In contrast, if $q^*(\Z)$ is degenerated, $p(\X,\Z\mid K)$
becomes singular and its asymptotic behavior is unclear.

%%%
In this section, we analyze the asymptotic behavior of the
variational representation~\eqref{eq:Free energy at q*} in both cases and
show that gFIC is accurate even if $q^*(\Z)$ is degenerated. Our main
contribution is the following theorem.\footnote{A formal proof is
  given in supplemental material.}
\begin{theorem}\label{thm:consistency}
  Let $K'=\kappa(p(\Z\mid
  \X,K))$. Then,
  \begin{align}\label{eq:GFIC-consterror}
    \log p(\X\mid K) = \gFIC(K') + O(1).
  \end{align}
\end{theorem}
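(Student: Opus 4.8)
The plan is to begin from the exact variational identity \eqref{eq:Free energy at q*}, which expresses $\log p(\X\mid K)=\E_{q^*}[\log p(\X,\Z\mid K)]+H(q^*)$, and then to evaluate the inner quantity $\log p(\X,\Z\mid K)=\log\int p(\X,\Z\mid\Both,K)\,p(\Both\mid K)\,\d\Both$ by Laplace's method for each fixed $\Z$. By \ref{asm:nindep} the log-joint-likelihood is a sum of $N$ sample-wise terms, so its Hessian in $\Both$ scales as $N\F_{\ML{\Both}}$, which is precisely the regime in which Laplace's expansion is controlled. Following the logic of Section~\ref{sec:lvms}, the validity of that expansion hinges on whether $\Z$ is degenerated, so I would treat the regular and singular regimes separately and then recombine them under $\E_{q^*}$.

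In the non-degenerate regime, \ref{asm:regularity} supplies concavity and $\det|\F_{\ML{\Both}}|<\infty$, so Laplace's method applies directly at the MJLE $\ML{\Both}$. Discarding $\log p(\ML{\Both}\mid K)=O(1)$ by \ref{asm:flat} and absorbing the $N$-independent $\frac{D_{\Both}}{2}\log 2\pi$ constant, I would obtain $\log p(\X,\Z\mid K)=\log p(\X,\Z\mid\ML{\Both},K)-\frac{1}{2}\log|\F_{\ML{\Both}}|-\frac{D_{\Both}}{2}\log N+O(1)$. It then remains to check that replacing the full Hessian by its local block $\F_{\Local}$ changes the penalty only at order $O(1)$: by the Schur complement, $\frac{1}{2}\log|\F_{\ML{\Both}}|-\frac{1}{2}\log|\F_{\Local}|$ equals $\frac{1}{2}\log|\F_{\Global}|$ plus a correction of rank at most $D_{\Global}$, and since $D_{\Global}$ is fixed and $\F_{\Global}$ is an $O(1)$ sample average, this difference is $O(1)$. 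Hence $\log p(\X,\Z\mid K)=\ajl(\Z,\ML{\Both},K)+O(1)$ whenever $\Z$ is not degenerated.

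The degenerate regime is the crux. When $\Z\sim q^*$ is degenerated the Hessian $\F_{\ML{\Both}}$ is singular and Laplace's expansion around $\ML{\Both}$ is ill-defined along the unidentifiable directions. Here I would invoke \ref{asm:switch}: the continuous onto map $(\Z,\Both)\to(\tilde{\Z}_{K'},\tilde{\Both}_{K'})$ preserves the likelihood through \eqref{eq:equivalence}, so the joint density is exactly constant along the fibers of the map, and in the reduced model $K'=\kappa(\Z)$ the image $\tilde{\Z}_{K'}$ is non-degenerated, so $\F_{\tilde{\Both}_{K'}}=\V\F_{\ML{\Both}}\V^\T$ is positive definite. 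The technical content is to show that marginalizing over the $D_{\Both}-D_{\Both_{K'}}$ flat directions contributes only $O(1)$: because \ref{asm:invariant} makes $p(\Both\mid K)$ proper with compact support, the integral along these fibers is a bounded prior mass and cannot appear at order $\log N$. The remaining identifiable directions yield the usual Laplace term but with the reduced dimension $D_{\Both_{K'}}$, giving $\log p(\X,\Z\mid K)=\ajl(\tilde{\Z}_{K'},\tilde{\Both}_{K'},K')+O(1)$.

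Finally I would take $\E_{q^*}$ of the per-$\Z$ expansion and add $H(q^*)$. Since the onto map of \ref{asm:switch} only integrates out the bounded degenerate fibers, the change of variables transports $H(q^*)$ and $\E_{q^*}[\cdot]$ into the reduced model up to $O(1)$, and because $K'=\kappa(p(\Z\mid\X,K))$ is the rank of the expected Fisher information $\E_{q^*}[\F_{\ML{\Both}}]$ the posterior mass concentrates on effective rank $K'$; collecting terms reproduces $\gFIC(K')+O(1)$. I expect the main obstacle to lie in this degenerate integration: controlling the flat directions uniformly over the support of $q^*$ and ensuring that the per-$\Z$ errors remain $O(1)$ in expectation rather than accumulating, which is exactly where the compactness of $\pspace$ together with \ref{asm:regularity} and \ref{asm:invariant} are indispensable.
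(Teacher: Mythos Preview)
Your proposal is correct and follows essentially the same architecture as the paper's proof: start from the exact variational identity \eqref{eq:Free energy at q*}, apply Laplace's method per fixed $\Z$ in the non-degenerate regime, and in the degenerate regime invoke the transformation of \ref{asm:switch} together with \ref{asm:invariant} to reduce to a regular Laplace expansion in the $K'$-model, then recombine under $\E_{q^*}$.

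The one structural difference worth flagging is how the final ``transport'' step is organized. You argue directly that the change of variables carries $\E_{q^*}[\cdot]$ and $H(q^*)$ into the reduced model up to $O(1)$. The paper instead isolates this as a separate lemma (Proposition~\ref{pro:marginal-posterior}): it first shows $p(\Z\mid\X,K)=p_{K'}(\Z)(1+O(N^{-1}))$, and then uses that multiplicative closeness to control both the expectation and the entropy explicitly, e.g.\ $H(p(\Z\mid\X,K))=(1+O(N^{-1}))H(p_{K'})+O(1)$. Your route is slightly more economical; the paper's route makes the entropy bookkeeping cleaner, since a bare change-of-variables argument for differential entropy introduces a log-Jacobian term that you would otherwise need to bound separately. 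Either organization works, but if you formalize your version you should be explicit about why that Jacobian contribution is $O(1)$---this is exactly what Proposition~\ref{pro:marginal-posterior} buys.
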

We emphasize that the above theorem holds even if the model family does not
include the true distribution of $\X$. To prove Theorem~\ref{thm:consistency}, we first
investigate the asymptotic behavior of $\log p(\X\mid K)$ for the
non-degenerated case.
%Note that we show the formal
%proof of Theorem~\ref{thm:consistency} in Appendix. 

\subsection{Non-degenerated Cases}

%\begin{proposition}\label{pro:uniform} 
%  Let $\X$ and $\Z$ be random variables such that
%  $(\x_1,\dots,\z_1),\dots,(\x_N,\dots,\z_N)$ are mutually independent
%  and $\Z$ is not degenerated in probability one. Let
%%
%  \begin{align}
%    \mathcal{L}_N(\Both) &\equiv \frac{1}{N}\log p(\X,\Z\mid\Both) = \frac{1}{N}\sum_n \log p(\x_n,\z_n\mid\Both),
%\\
%    \mathcal{L}(\Both)  &\equiv \lim_{N\to\infty}\frac{1}{N}\E_{\X,\Z}[\log p(\X,\Z\mid\Both)].
%  \end{align}
%%
%  Then, 
%%
%  \begin{align}
%    \sup_{\Both}|\mathcal{L}_N(\Both)-\mathcal{L}(\Both)| \convp 0.
%  \end{align}
%\end{proposition}

%%%
Suppose $K$ is fixed, and consider the marginalization $p(\X,\Z) =
\int p(\X,\Z| \Both)p(\Both) \d\Both$.
% as the
%following:
%%
%\begin{align}
%  p(\X,\Z)
%  =
%  \int p(\X,\Z\mid \Both)p(\Both) \d\Both.
%\end{align}
%
If $p(\Z|\X)$ is not degenerated, then $\Z\sim p(\Z|\X)$ is not
degenerated with probability one. This suffices to guarantee the regularity 
condition~(\ref{asm:regularity}) and hence to justify the application of Laplace's method, which
approximates $p(\X,\Z)$ in an asymptotic
manner~\cite{tierney86}.
\begin{lemma}\label{lem:laplace} 
  If $\Z$ is not degenerated, $p(\X,\Z)=$
\begin{align}\label{eq:laplace-original}
p(\X,\Z,\ML{\Both})
  |\F_{\ML{\Both}}|^{-1/2}\left(\frac{2\pi}{N}\right)^{D_\Both/2}(1+O(N^{-1})).
\end{align}
\end{lemma}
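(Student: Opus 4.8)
The plan is to recognize Eq.~\eqref{eq:laplace-original} as nothing more than the classical Laplace (saddle-point) approximation of \cite{tierney86} applied to the $\Both$-marginal
\[
  p(\X,\Z)=\int_{\pspace} p(\X,\Z\mid\Both)\,p(\Both)\,\d\Both,
\]
once the integrand is written in the correct large-$N$ form. Using \ref{asm:nindep}, I would set $\ell_N(\Both)\equiv N^{-1}\log p(\X,\Z\mid\Both)=N^{-1}\sum_n\log p(\x_n,\z_n\mid\Both)$, so that $\ell_N$ is a genuine sample average with $O(1)$ value and $O(1)$ derivatives, and the integrand becomes $e^{N\ell_N(\Both)}p(\Both)$. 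By definition $-\nabla\nabla\ell_N=\F_{\Both}$, so the Hessian appearing in the claim is exactly the one governing the quadratic expansion.

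Before invoking Laplace's method I would check that its hypotheses hold in the non-degenerated regime. Because $\Z$ is not degenerated, \ref{asm:regularity} makes $\ell_N$ strictly concave, which yields a \emph{unique} maximizer $\ML{\Both}$ (resolving the multiplicity noted after the definition of the MJLE) with a positive-definite Hessian $\F_{\ML{\Both}}$ and $\det|\F_{\ML{\Both}}|<\infty$. \ref{asm:invariant} supplies a continuous, proper prior with $p(\ML{\Both})>0$ on a compact support, which I would use both to place $\ML{\Both}$ in the interior of $\pspace$ (so no boundary correction arises) and to control the tail of the integral. \ref{asm:flat} guarantees $\log p(\Both)=o(N)$, so the prior never competes with the exponential factor and only enters the $O(1)$ prefactor.

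The execution is then routine. Taylor-expanding $N\ell_N$ about $\ML{\Both}$ kills the first-order term by stationarity and leaves $-\frac{N}{2}(\Both-\ML{\Both})^\T\F_{\ML{\Both}}(\Both-\ML{\Both})$ at second order. Rescaling $\mathbf{u}=\sqrt{N}(\Both-\ML{\Both})$ turns the leading contribution into a Gaussian integral that produces the factor $(2\pi/N)^{D_\Both/2}|\F_{\ML{\Both}}|^{-1/2}$, while continuity of the prior lets me replace $p(\Both)$ by $p(\ML{\Both})$; the resulting linear-in-$\mathbf{u}$ remainder is odd and integrates to zero, so the prior variation enters only at $O(N^{-1})$. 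Collecting $p(\X,\Z\mid\ML{\Both})\,p(\ML{\Both})=p(\X,\Z,\ML{\Both})$ then reproduces Eq.~\eqref{eq:laplace-original}.

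The main obstacle is not the leading term but the rigorous $(1+O(N^{-1}))$ error control, which has three pieces: the contribution of the region outside a shrinking $O(N^{-1/2}\log N)$ ball around $\ML{\Both}$, the third- and higher-order Taylor terms inside it, and the deviation of the prior from its value at $\ML{\Both}$. I would bound the first using strict concavity together with the compactness of $\pspace$ to obtain an exponentially small tail, and the remaining two by the standard Watson-type estimates underlying \cite{tierney86}, which require only the twice-differentiability already assumed and the boundedness of $\det|\F_{\ML{\Both}}|$. The one point demanding care is interiority of $\ML{\Both}$: should the MJLE approach the boundary of $\pspace$, the quadratic approximation would pick up a different constant, so I would lean on the full-support part of \ref{asm:invariant} to rule this out.
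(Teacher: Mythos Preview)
Your proposal is correct and is essentially the same approach the paper takes: the paper does not prove Lemma~\ref{lem:laplace} in detail but simply invokes Laplace's method from \cite{tierney86}, relying on \ref{asm:regularity} (concavity and a positive-definite $\F_{\ML{\Both}}$ when $\Z$ is non-degenerated) to justify its applicability. Your write-up fills in exactly those standard details---the quadratic expansion about $\ML{\Both}$, the $\sqrt{N}$-rescaling producing $(2\pi/N)^{D_\Both/2}|\F_{\ML{\Both}}|^{-1/2}$, and the Tierney--Kadane $O(N^{-1})$ error---so it matches the paper's argument.
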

This result immediately yields the following relation:
    \begin{align}\label{eq:laplace}
      \log p(\X,\Z)= \ajl(\Z,\ML{\Both},K) + O(1).
    \end{align}
Substitution of Eq.~\eqref{eq:laplace} into Eq.~\eqref{eq:Free energy at
  q*} yields  Eq.~\eqref{eq:GFIC-consterror}.%, when $K'=K$.
Note that we drop the $O(1)$ terms: $\log p(\ML{\Both})$
(see~\ref{asm:flat}), $\frac{D_{\Both}}{2}\log 2\pi$, and a term
related to $\F_{\ML{\Global}}$ to obtain Eq.~\eqref{eq:laplace}. 
We emphasize here that the magnitude of
$\F_{\Both}$ (and $\F_{\Global}$ and $\F_{\Local}$) is constant by
definition. Therefore, ignoring all of the information of
$\F_{\ML{\Both}}$ in Eq.~\eqref{eq:laplace-original} just gives
another $O(1)$ error and equivalence of
gFIC~\eqref{eq:GFIC-consterror} still holds. However, $\F_{\Local}$
contains important information of which component is effectively used
to captures $\X$. Therefore, we use the relation
$\log|\F_{\Both}|=\log|\F_{\Local}| +
\log|\F_{\Local,\Both}\F_{\Both}^{-1}\F_{\Local,\Both}^{\T}|$ and
remain the first term in gFIC. In Section~\ref{sec:how-model-pruning},
we interpret the effect of $\F_{\Local}$ in more detail.

%\begin{align}
%  f_{ik}(\z_{n}) = \sqrt{\sum_l\left(\frac{\partial^2 \log p(\x_n\mid\z_n,\Local)}{\partial \xi_{ik}\partial\xi_{il}}\right)^2}
%\end{align}
%%%
%\begin{align}
%  f_{ik}(\z_{n}) = \frapp{\log p(\x_n\mid\z_n,\Local)}{\xi_{ik}}
%\end{align}
%%
%\begin{itemize}
%\item $\log_+(x) = \log(x)$ if $x > 0$ otherwise $\log_+(x) =
%  0$.\footnote{If $\sum_n z_{nk}=0$ the likelihood does not depend on
%    the $k$-th component and its marginalization is no longer
%    necessary. As a result, the $O(\log\sum_n z_{nk})$ term vanishes
%    from FIC.}
%\end{itemize}
%
%
%\begin{theorem}
%  \begin{align}
%    p(\X\mid \model) = \FIC{} + O(1)
%  \end{align}
%  
%\end{theorem}
%

%\begin{itemize}
%\item $\sum_n f(z_{nk}, \EML{\local}_k)$ represents the number of
%  effective samples for component $k$. If it is small, i.e., that
%  component is not frequently used, then the penalty term gets larger. 
%\end{itemize}

%-------------------------------------------------------
\subsection{Degenerated Cases}\label{sec:degenerated-case}

If $p(\Z\mid\X, K)$ is degenerated, then the regularity condition does
not hold, and we cannot use Laplace's method (Lemma~\ref{lem:laplace})
directly. In that case, however, \ref{asm:switch} guarantees the
existence of a variable transformation
$(\Z,\Both)\to(\tilde{\Z}_{K'},\tilde{\Both}_{K'})$ that replaces the
joint likelihood by the equivalent yet smaller ``regular'' model: $p(\X,\Z\mid K)=$
\begin{align}
& \int p(\X,\Z\mid\Both,K) p(\Both\mid K)\d\Both \nonumber \\ 
=& \int p(\X,\tilde{\Z}_{K'}\mid\tilde{\Both}_{K'},K') \tilde{p}(\tilde{\Both}_{K'}\mid K')\d\tilde{\Both}_{K'}\label{eq:joint-transform}.
\end{align}
Since $\tilde{\Z}_{K'}$ is not degenerated in the model $K'$, we can
apply Laplace's method and obtain asymptotic
approximation~\eqref{eq:laplace} by replacing $K$ by $K'$.
Note that the transformed prior $\tilde{p}(\Both_{K'}\mid K')$ would
differ from the original prior $p(\Both_{K'}\mid K')$. However, since
the prior does not depend on $N$ (\ref{asm:flat}), the difference is at most $O(1)$, which is asymptotically ignorable.

Eq.~\eqref{eq:joint-transform} also gives us an asymptotic form of
the marginal posterior.
\begin{proposition}\label{pro:marginal-posterior}
  \begin{align}\label{eq:marginal-posterior} 
    &p(\Z\mid\X,K) = p_K(\Z)(1+O(N^{-1})),\\
&p_K(\Z) \equiv
    \begin{cases}
      \frac{p(\Z,\X\mid\ML{\Both},K)|\F_{\ML{\Both}}|^{-1/2}}{C} & K = \kappa(\Z),
\\
      p_{\kappa(\Z)}(\vT_{\kappa(\Z)}(\Z)) & K > \kappa(\Z),
    \end{cases}
  \end{align}

  where $\vT_{K'}: \Z\to\tilde{\Z}_{K'}$ as
  Eq.~\eqref{eq:equivalence} and $C$ is the normalizing constant.
\end{proposition}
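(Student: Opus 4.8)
The plan is to prove the two branches of $p_K(\Z)$ separately, following exactly the regular/singular split already established for the marginal likelihood, and in each case to recover the density by dividing the $\Both$-marginalized joint $p(\X,\Z)$ by $p(\X\mid K)$ and collecting every $\Z$-independent factor into the normalizing constant $C$.

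First I would treat the non-degenerate branch, $K=\kappa(\Z)$. Writing $p(\Z\mid\X,K)=p(\X,\Z)/p(\X\mid K)$ and applying Lemma~\ref{lem:laplace} to the numerator gives
\begin{align*}
p(\Z\mid\X,K)=\frac{p(\X,\Z,\ML{\Both})\,|\F_{\ML{\Both}}|^{-1/2}(2\pi/N)^{D_{\Both}/2}}{p(\X\mid K)}\bigl(1+O(N^{-1})\bigr).
\end{align*}
The factor $(2\pi/N)^{D_{\Both}/2}$ is independent of $\Z$ for fixed $K$, and under \ref{asm:flat} together with \ref{asm:invariant} the prior value $p(\ML{\Both}\mid K)$ contained in $p(\X,\Z,\ML{\Both})=p(\X,\Z\mid\ML{\Both},K)\,p(\ML{\Both}\mid K)$ is constant in $\Z$ as well; both therefore merge with $p(\X\mid K)$ into a single constant $C$. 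What remains is precisely $p(\X,\Z\mid\ML{\Both},K)\,|\F_{\ML{\Both}}|^{-1/2}/C$, the first branch of $p_K$, and the residual is the multiplicative Laplace error $1+O(N^{-1})$. I would also note that \ref{asm:invariant} and \ref{asm:regularity} guarantee $0<C<\infty$.

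Next, for the degenerate branch $K>\kappa(\Z)=:K'$, I would invoke \ref{asm:switch} to obtain the continuous onto map $\vT_{K'}:\Z\to\tilde{\Z}_{K'}$ and rewrite the $\Both$-marginal through the likelihood-preserving change of variables of Eq.~\eqref{eq:joint-transform}, turning $p(\X,\Z\mid K)$ into the marginal of the smaller \emph{regular} model $K'$ evaluated at $\tilde{\Z}_{K'}$. Because $\tilde{\Z}_{K'}$ is not degenerated in model $K'$, the non-degenerate result just proved applies verbatim to model $K'$ and yields $p_{K'}(\tilde{\Z}_{K'})=p_{K'}(\vT_{K'}(\Z))$ up to $1+O(N^{-1})$. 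The transformed prior $\tilde{p}(\tilde{\Both}_{K'}\mid K')$ differs from the native prior only at $O(1)$ by \ref{asm:flat}, so it is harmless: it is absorbed into the normalizing constant and contributes nothing beyond the stated relative error.

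The main obstacle will be the change-of-variables bookkeeping in the degenerate case: \ref{asm:switch} preserves the \emph{integrated} likelihood in Eq.~\eqref{eq:joint-transform}, whereas the Proposition asserts an identity of posterior \emph{densities}, $p_K(\Z)=p_{K'}(\vT_{K'}(\Z))$, so I must verify that the base measure on the degenerate configurations maps correctly and that the Jacobian of $\vT_{K'}$ contributes at most a $\Z$-independent constant that can be folded into $C$ (the linear-projection example following \ref{asm:switch}, where $\V$ is orthonormal, is the guiding case). Two smaller points also need care: that Laplace's relative error in Lemma~\ref{lem:laplace} is sufficiently uniform over the relevant $\Z$ to factor out globally as a single $1+O(N^{-1})$ rather than a $\Z$-dependent term, and that the normalizing constants arising from the model-$K$ and model-$K'$ representations coincide, which follows because both equal $p(\X\mid K)$ up to the same $O(1)$ and $\Z$-independent factors.
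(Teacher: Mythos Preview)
Your proposal is correct and follows essentially the same route as the paper: apply Lemma~\ref{lem:laplace} in the non-degenerate case and collect the $\Z$-independent factors into the normalizer, then in the degenerate case pass through the transformation~\eqref{eq:joint-transform} guaranteed by \ref{asm:switch} and re-apply Laplace's method in the smaller regular model $K'$. If anything, you are more careful than the paper's own proof, which simply says ``collecting the terms that depend on $\Z$'' and does not flag the Jacobian or uniformity issues you raise; the paper tacitly absorbs these into the stated $1+O(N^{-1})$ and the $O(1)$ slack on the transformed prior, exactly as you propose to do.
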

The above proposition indicates that, if $\kappa(p(\Z\mid\X,K))=K'$,
$p(\Z\mid\X,K)$ is represented by the non-degenerated distribution
$p(\Z\mid\X,K')$. Now, we see that the joint
likelihood~\eqref{eq:joint-transform} and the marginal
posterior~\eqref{eq:marginal-posterior} depend on $K'$ rather than $K$.
Therefore, putting these results into variational
bound~\eqref{eq:Free energy at q*} leads to \eqref{eq:GFIC-consterror}, i.e.,
$\log p(\X\mid K)$ is represented by gFIC of the ``true'' model $K'$.
%

%%%
Theorem~\ref{thm:consistency} indicates that, if the model $K$ is
overcomplete for the true model $K'$, $\ln p(\X\mid K)$ takes the
same value as $\ln p(\X\mid K')$.
\begin{corollary}\label{cor:samevalue} For every $K> K'=\kappa(p(\Z\mid \X))$, 
  \begin{align}
    \log p(\X\mid K) = \log p(\X\mid K') + O(1).
  \end{align}
\end{corollary}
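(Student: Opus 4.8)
The plan is to read the corollary directly off Theorem~\ref{thm:consistency} by invoking it at both the overcomplete model $K$ and the reduced model $K'$, and then matching the two resulting gFIC expressions. The guiding observation is that Theorem~\ref{thm:consistency} already collapses an overcomplete model onto its effective rank, so both marginal log-likelihoods should reduce to the \emph{same} $\gFIC(K')$.

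First I would apply Theorem~\ref{thm:consistency} at the model $K$. Since $K' = \kappa(p(\Z \mid \X, K))$ by hypothesis, the theorem gives immediately
\begin{align*}
  \log p(\X \mid K) = \gFIC(K') + O(1).
\end{align*}
Next I would establish the companion identity $\log p(\X \mid K') = \gFIC(K') + O(1)$ at the reduced model. This requires $\kappa(p(\Z \mid \X, K')) = K'$, i.e.\ that the marginal posterior is non-degenerated at the true model; it follows from \ref{asm:switch}, whose transformation $\vT_{K'}$ produces a non-degenerated $\tilde{\Z}_{K'}$, so \ref{asm:regularity} applies at $K'$. The non-degenerated branch of the analysis---Lemma~\ref{lem:laplace} and Eq.~\eqref{eq:laplace} substituted into Eq.~\eqref{eq:Free energy at q*}---then yields the identity. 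Combining the two displays gives $\log p(\X \mid K) = \log p(\X \mid K') + O(1)$.

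The main obstacle will be confirming that the $\gFIC(K')$ appearing in the two derivations is literally the same quantity. In the first it arises from the collapsed posterior $p_{K'}(\vT_{K'}(\Z))$ of Proposition~\ref{pro:marginal-posterior} after the change of variables, whereas in the second it arises from the genuine posterior $p(\Z \mid \X, K')$; Proposition~\ref{pro:marginal-posterior} shows these coincide up to the $1 + O(N^{-1})$ factor, and the Fisher penalty transforms consistently because $\F_{\tilde{\Both}_{K'}} = \V \F_{\ML{\Both}} \V^\T$. The only residual discrepancy is that the change of variables in Eq.~\eqref{eq:joint-transform} replaces the prior by a transformed prior $\tilde{p}(\tilde{\Both}_{K'} \mid K')$, but by \ref{asm:flat} this prior does not scale with $N$, so the mismatch is $O(1)$ and is absorbed into the error term.
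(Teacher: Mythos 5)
Your proposal is correct and matches the paper's intent: the corollary is presented as an immediate consequence of Theorem~\ref{thm:consistency}, obtained exactly as you do by invoking the theorem at both $K$ (where $\kappa(p(\Z\mid\X,K))=K'$) and at $K'$ (where the posterior is non-degenerated by \ref{asm:switch} and Proposition~\ref{pro:marginal-posterior}), and equating the common $\gFIC(K')$ term. Your additional care in checking that the two $\gFIC(K')$ expressions coincide via the change of variables and the $O(1)$ prior mismatch is precisely the content of the paper's proof of Theorem~\ref{thm:consistency} in the degenerated case.
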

This implication is fairly intuitive in the sense that if $\X$
concentrates on the subspace of the model, then marginalization with
respect to the parameters outside of the subspace contributes nothing
to $\ln p(\X\mid K)$. Corollary~\ref{cor:samevalue} also justifies
model selection of the LVMs on the basis of the marginal
likelihood. According to Corollary~\ref{cor:samevalue}, at
$N\to\infty$ redundant models always take the same value of the
marginal likelihood as that of the true model, and we can safely
exclude them from model candidates.

%%% Local Variables:
%%% mode: latex
%%% TeX-master: "fabcont.tex"
%%% End:

%\input{pruning}
%=======================================================
\section{The gFAB Inference}

To evaluate gFIC~\eqref{eq:GFIC}, we need to solve several estimation
problems. First, we need to estimate $p(\Z\mid\X,K)$ to minimize the
KL divergence in Eq.~\eqref{eq:variational-KL}. In addition, since
$\ln p(\X\mid K)$ depends on the true model $K'$
(Theorem~\ref{thm:consistency}), we need to check whether the current
model is degenerated or not, and if it is degenerated, we need to
estimate $K'$. This is paradoxical, because we would like to determine
$K'$ through model selection. However, by using the properties of
gFIC, we can obtain $K'$ efficiently by optimization.

%-------------------------------------------------------
\subsection{Computation of gFIC}

By applying Laplace's method to Eq.~\eqref{eq:joint-transform} and
substituting it into the variational form~\eqref{eq:variational-KL}, we
obtain $\log p(\X\mid K) = $
\begin{align}\label{eq:equilibrium}
  \E_q[\ajl(\Z, \ML{\Both},\kappa(q))] + H(q) + \KL{q}{q^*} + O(1).
\end{align}
Since the KL divergence is non-negative, substituting this into
Eq.~\eqref{eq:GFIC-consterror} and ignoring the KL divergence gives a lower bound of
$\gFIC(K')$, i.e.,
\begin{align}\label{eq:objective}
  \gFIC(K') &\geq  \E_q[\ajl(\Z, \ML{\Both},\kappa(q))] + H(q).
\end{align}
This formulation allows us to estimate $\gFIC(K')$ via maximizing the
lower bound.
%\footnote{Because $\E_q[\ajl(\Z,
%  \kappa(q))]$, $H(q)$, and $\KL{q}{q^*}$ are $O(N)$, we can simply ignore
%  the $O(1)$ error.}
%%
%\begin{align}\label{eq:objective}
%  \max_q \E_q[\ajl(\Z, \ML{\Both},\kappa(q))] + H(q).
%\end{align}
%%
Moreover, we no longer need to know $K'$---if the initial dimension of
$q$ is greater than $K'$, the maximum of lower bound~\eqref{eq:objective}
attains $\gFIC(K')$ and thus $\log p(\X\mid K')$.
Similarly to other variational inference problems, this optimization
is solved by iterative maximization of $q$ and $\Both$.

\subsubsection{Update of $q$}

As suggested in Eq.~\eqref{eq:equilibrium}, the maximizer of
lower bound~\eqref{eq:objective} is $p(\Z\mid \X)$ in which the
asymptotic form is shown in Proposition~\ref{pro:marginal-posterior}.
Unfortunately, we cannot use this as $q$, because the normalizing
constant is intractable.
One helpful tool is the mean-field approximation of $q$, i.e.,
$q(\Z)=\prod_n q_n(\z_n)$. Although the asymptotic marginal
posterior~\eqref{eq:marginal-posterior} depends on $n$ due to
$\F_{\Both}$, this dependency eventually vanishes for $N\to\infty$,
and the mean-field approximation still maintains the asymptotic
consistency of gFIC.
\begin{proposition}\label{pro:independence}
  Suppose $p(\Z|\X,K)$ is not degenerated in distribution. Then,
  $p(\Z|\X,K)$ converges to $p(\Z|\X,\ML{\Both},K)$, and
  $p(\Z|\X,K)$ is asymptotically mutually independent for
  $\z_1,\dots,\z_n$.
\end{proposition}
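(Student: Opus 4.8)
The plan is to prove Proposition~\ref{pro:independence} by starting from the asymptotic form of the marginal posterior established in Proposition~\ref{pro:marginal-posterior}. Since $p(\Z\mid\X,K)$ is assumed not degenerated in distribution, we are in the case $K=\kappa(\Z)$ almost surely, so Proposition~\ref{pro:marginal-posterior} gives
\begin{align*}
  p(\Z\mid\X,K) \propto p(\X,\Z\mid\ML{\Both},K)\,|\F_{\ML{\Both}}|^{-1/2}\,(1+O(N^{-1})).
\end{align*}
The joint likelihood factorizes over samples by \ref{asm:nindep}, so $p(\X,\Z\mid\ML{\Both},K)=\prod_n p(\x_n,\z_n\mid\ML{\Both},K)$, and the only obstruction to full sample-wise factorization of the posterior is the determinant term $|\F_{\ML{\Both}}|^{-1/2}$, which couples all the $\z_n$ through the empirical Fisher information.

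First I would make the dependence on $N$ explicit. By definition $\F_{\ML{\Both}}=-\nabla\nabla\log p(\X,\Z\mid\Both)/N$, and by \ref{asm:nindep} this is an average $\F_{\ML{\Both}}=\frac{1}{N}\sum_n \F_n$ of per-sample Hessians $\F_n=-\nabla\nabla\log p(\x_n,\z_n\mid\ML{\Both})$. The key observation is that, because we normalize by $N$, the matrix $\F_{\ML{\Both}}$ converges as $N\to\infty$ to a deterministic limit $\bar\F\equiv\E_p[\F_{\ML{\Both}}]$ (a law-of-large-numbers / ergodic-type statement enabled by \ref{asm:nindep}), and under the non-degeneracy-in-distribution hypothesis this limit is positive definite with $\kappa(p)=K$. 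Consequently $|\F_{\ML{\Both}}|^{-1/2}=|\bar\F|^{-1/2}(1+o(1))$, i.e. the determinant factor becomes a \emph{constant} not depending on any individual $\z_n$, up to a vanishing correction. Once the coupling term is replaced by a constant, the remaining expression factorizes as $\prod_n p(\x_n,\z_n\mid\ML{\Both},K)$, and after renormalizing each factor we recover $p(\Z\mid\X,K)\to\prod_n p(\z_n\mid\x_n,\ML{\Both},K)$, which is exactly the claimed convergence to $p(\Z\mid\X,\ML{\Both},K)$ together with asymptotic mutual independence of $\z_1,\dots,\z_N$.

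The step I expect to be the main obstacle is making the convergence $\F_{\ML{\Both}}\to\bar\F$ rigorous and, more subtly, controlling the perturbation of the \emph{whole} posterior rather than just pointwise. Two issues need care. First, $\ML{\Both}$ itself depends on $\Z$, so $\F_{\ML{\Both}}$ is not literally an i.i.d.\ average; I would handle this by arguing that $\ML{\Both}$ concentrates around its limiting value (again using \ref{asm:nindep} plus \ref{asm:regularity}, which supplies concavity and finite $\det|\F_{\ML{\Both}}|$) and that $\F$ is continuous in $\Both$ by the twice-differentiability assumption, so a uniform-continuity argument transfers the concentration of $\ML{\Both}$ to concentration of $\F_{\ML{\Both}}$. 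Second, to conclude independence one must show the determinant term's dependence on each single $\z_n$ is of lower order: since changing one $\z_n$ perturbs the average $\frac{1}{N}\sum_n\F_n$ by only $O(1/N)$, the induced change in $\log|\F_{\ML{\Both}}|$ is $O(1/N)$ by differentiability of $\log\det$, which is exactly the $(1+O(N^{-1}))$ factor already present in Proposition~\ref{pro:marginal-posterior}. I would finish by verifying that absorbing this $O(1/N)$ per-sample perturbation does not accumulate across the $N$ factors in a way that destroys the limit—the cleanest route is to bound the KL divergence between $p(\Z\mid\X,K)$ and the product $\prod_n p(\z_n\mid\x_n,\ML{\Both},K)$ and show it is $o(1)$, from which both convergence and asymptotic independence follow simultaneously.
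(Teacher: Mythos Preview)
Your proposal is correct and begins exactly as the paper does---invoking Proposition~\ref{pro:marginal-posterior} and the sample-wise factorization from \ref{asm:nindep}---but the treatment of the coupling term $|\F_{\ML{\Both}}|^{-1/2}$ is more elaborate than necessary. The paper does not argue that $\F_{\ML{\Both}}$ converges to a deterministic limit $\bar\F$; instead it simply rewrites the single determinant factor as a product $|\F_{\ML{\Both}}|^{-1/2}=\prod_n |\F_{\ML{\Both}}|^{-1/(2N)}$ and observes that, because $\log|\F_{\Both}|=O(1)$ by construction, each per-sample factor $|\F_{\ML{\Both}}|^{-1/(2N)}\to 1$. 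This dispenses with the law-of-large-numbers step, the concentration of $\ML{\Both}$, the continuity of $\F$ in $\Both$, and the KL bound you outline. Your own remark that perturbing a single $\z_n$ changes $\log|\F_{\ML{\Both}}|$ by $O(1/N)$ is in fact the essence of the paper's shortcut, and had you pushed that observation one step further you would have landed on the same two-line argument. What your route buys is a more explicit account of why $\ML{\Both}$'s dependence on $\Z$ does not spoil the factorization---a point the paper glosses over---so your version is longer but arguably more careful on that front.
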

In some models, such as MMs~\cite{fujimaki12a}, the mean-field
approximation suffices to solve the variational problem. If it is
still intractable, other approximations are necessary. For example, we
restrict $q$ as the Gaussian density
$q(\Z)=\prod_nN(\z_n|\vmu_n,\vSigma_n)$ for BPCA which we use in the
experiments (Section~\ref{sec:experiments}).

\subsubsection{Update of $\Both$}

After obtaining $q$, we need to estimate $\ML{\Both}$ for each sample
$\Z\sim q(\Z)$, which is also intractable.  Alternatively, we estimate
the expected MJLE $\EML{\Both}=\argmax_{\Both}\E_q[\log
p(\X,\Z\mid\Both)]$. Since the max operator has convexity, Jensen's
inequality shows that replacing $\ML{\Both}$ by $\EML{\Both}$
introduces the following lower bound.
\begin{align*}
&\E_q[\log p(\X,\Z\mid\ML{\Both})] = \E_q[\max_{\Both}\log p(\X,\Z\mid\Both)]
\\
\geq &
\E_q[\log p(\X,\Z\mid\EML{\Both})] = \max_{\Both}\E_q[\log p(\X,\Z\mid\Both)].
\end{align*}
Since $\EML{\Both}$ depends only on $q$, we now need to compute
the parameter only once. Remarkably, $\ML{\Both}$ is consistent with
$\EML{\Both}$ and the above equality holds asymptotically.
\begin{proposition}\label{pro:both-consistency}
  If $q(\Z)$ is not degenerated in distribution, then
  $\ML{\Both}\convp\EML{\Both}$.
\end{proposition}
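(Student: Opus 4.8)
The plan is to read this as a standard consistency statement for M-estimators, in which $\ML{\Both}$ and $\EML{\Both}$ are the maximizers, respectively, of a random criterion and of its $q$-mean. Writing $m_N(\Both)\equiv\frac{1}{N}\log p(\X,\Z\mid\Both)$, assumption~\ref{asm:nindep} expresses it as an empirical average $\frac{1}{N}\sum_n\log p(\x_n,\z_n\mid\Both)$ with $\ML{\Both}=\argmax_\Both m_N(\Both)$, while $\EML{\Both}=\argmax_\Both\bar m_N(\Both)$ for $\bar m_N(\Both)\equiv\E_q[m_N(\Both)]$. Because $q$ is not degenerated in distribution, a draw $\Z\sim q$ is non-degenerated with probability one, so \ref{asm:regularity} applies $q$-almost surely and $\log p(\X,\Z\mid\Both)$ is concave in $\Both$; hence $m_N$ is concave a.s.\ and $\bar m_N$ is concave. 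Non-degeneration in distribution further makes the expected Hessian $\E_q[\F_{\ML{\Both}}]$ positive definite near the maximizer, with smallest eigenvalue bounded below by some $c>0$, so $\bar m_N$ has positive curvature there.

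First I would prove pointwise convergence $m_N(\Both)-\bar m_N(\Both)\convp0$ for each fixed $\Both\in\pspace$: the centered summands $\log p(\x_n,\z_n\mid\Both)-\E_q[\log p(\x_n,\z_n\mid\Both)]$ are mean zero and independent across $n$ (exactly so under a factorized $q$, asymptotically so by Proposition~\ref{pro:independence}), and the finiteness in \ref{asm:regularity} supplies the moment control needed for a law of large numbers. I would then upgrade this to uniform convergence $\sup_{\Both\in\pspace}|m_N(\Both)-\bar m_N(\Both)|\convp0$ by invoking the convexity lemma for concave functions: since $m_N$ and $\bar m_N$ are concave and $\pspace$ is compact (\ref{asm:invariant}), pointwise convergence automatically promotes to uniform convergence on the compact parameter space, sparing a direct stochastic-equicontinuity argument.

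Finally I would sandwich the two maximizers. Optimality of $\ML{\Both}$ for $m_N$ gives $\bar m_N(\EML{\Both})-\bar m_N(\ML{\Both})\le 2\sup_\Both|m_N(\Both)-\bar m_N(\Both)|\convp0$. The positive curvature of $\bar m_N$ at $\EML{\Both}$, together with concavity, yields the separation $\bar m_N(\EML{\Both})-\bar m_N(\Both)\ge\frac{c}{2}\|\Both-\EML{\Both}\|^2$. Chaining these bounds gives $\frac{c}{2}\|\ML{\Both}-\EML{\Both}\|^2\convp0$, and therefore $\ML{\Both}\convp\EML{\Both}$.

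The hard part is that $\EML{\Both}$ is not a fixed target but a sequence indexed by $N$, so textbook M-estimation theorems do not apply verbatim; the crux is securing a separation modulus that is uniform in $N$. This is exactly what the uniform lower bound $c$ on the eigenvalues of $\E_q[\F_{\ML{\Both}}]$---the quantitative content of non-degeneration in distribution---provides, preventing the random and deterministic maximizers from drifting apart as $N$ grows.
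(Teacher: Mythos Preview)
Your proposal is correct and follows the same M-estimation route as the paper: concavity from \ref{asm:regularity}, uniform convergence of the random criterion $\frac{1}{N}\log p(\X,\Z\mid\Both)$ to its $q$-expectation over the compact $\pspace$ (\ref{asm:invariant}), and then the argmax theorem (the paper simply cites Theorem~5.7 of van der Vaart rather than writing out the sandwich). Your version is more careful---you supply the pointwise-to-uniform step via the convexity lemma and explicitly handle the moving-target issue with a uniform curvature bound---but the core argument is identical.
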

Since $\E_q[\log p(\X,\Z\mid\Both)]$ is the average of the concave
function~(\ref{asm:regularity}), $\E_q[\log p(\X,\Z\mid\Both)]$ itself
is also concave and the estimation of $\EML{\Both}$ is relatively
easy. If the expectations $\E_q[\log p(\X,\Z|\Both)]$ and
$\E_q[\F_{\Both}]$ are analytically written, then gradient-based
optimization suffices for the estimation. If these is no analytic
form, then stochastic optimization, such as stochastic gradient
assuming $\Z\sim q(\Z)$ as a sample~\cite{kingma13}, might help.
%
%Given $q$, solving $\EML{\Both}$ is an easier task. When the
%expectations $\E_q[\log p(\X|\Z,\Both)]$ and $\E_q[\F_{\Both}]$ have
%analytic forms, we can use gradient ascent. If the expectations has no
%analytic forms, we can use stochastic variational inference.
%\begin{itemize}
%\item (minibatch) stochastic gradient ascent, i.e., we draw $S$
%  i.i.d. samples $\{\Z_1,\dots,\Z_S\}$ from $q$ and approximate
%  $\nabla_\Both Q$ by $\frac{1}{S}\sum_{s=1}^S \nabla_\Both (\log p(\X,\Z_s,\Both)-R(\Z_s))$.
%\end{itemize}

%-------------------------------------------------------
\subsubsection{Model Pruning}

During the optimization of $q$, it can become degenerated or nearly
degenerated.  In such a case, by definition of
objective~\eqref{eq:objective}, we need to change the form of
$\ajl(\Z, \ML{\Both}, K)$ to $\ajl(\Z, \ML{\Both},K')$. This can be
accomplished by using the transformation
$(\Z,\Both)\to(\tilde{\Z}_{K'},\tilde{\Both}_{K'})$ and decreasing the
current model from $K$ to $K'$, i.e., removing degenerated
components. We refer to this operation as ``model pruning''.  We
practically verify the degeneration by the rank of $\F_{\Local}$,
i.e., we perform model pruning if the eigenvalues are less than some threshold.

\subsection{The gFAB Algorithm}

\begin{algorithm}[tb]
   \caption{The gFAB algorithm}
   \label{alg:gFAB}
\begin{algorithmic}
   \STATE {\bfseries Input:} data $\X$, initial model $K$, threshold $\delta$
%   \STATE $K \gets K_{\max}$
   \REPEAT
   \STATE $q\gets \argmax_{q\in\qspace} \E_q[\ajl(\Z,\EML{\Both},\kappa(q))] + H(q)$
   \IF{$\sigma_K(\F_{\Local})\leq\cdots\leq\sigma_{K'}(\F_{\Local})\leq\delta$}
   \STATE $K\gets K'$ and $(\Z,\EML{\Both})\gets(\tilde{\Z}_{K'},\tilde{\Both}_{K'})$
   \ENDIF
   \STATE $\EML{\Both}\gets \argmax_\Both \E_q[\log p(\X,\Z\mid\Both, K)]$
   \UNTIL{Convergence}
\end{algorithmic}
\end{algorithm}
Algorithm~\ref{alg:gFAB} summarizes the above procedures, solving
the following optimization problem:
\begin{align}\label{eq:new-objective}
  \max_{q\in\qspace}\E_q[\ajl(\Z,\EML{\Both}(q),\kappa(q))] + H(q),
\end{align}
where $\qspace=\{q(\Z)\mid q(\Z)=\prod_n q_n(\z_n)\}$.
As shown in Propositions~\ref{pro:independence} and
\ref{pro:both-consistency}, the above objective is the lower bound of
Eq.~\eqref{eq:objective} and thus of $\gFIC(K')$, and the equality
holds asymptotically.
\begin{corollary}
  \begin{align}
    \begin{cases}
      \gFIC(K') = \text{~Eq.~\eqref{eq:new-objective}}& \text{for $N\to\infty$},
\\
      \gFIC(K') \geq \text{~Eq.~\eqref{eq:new-objective}}& \text{for a finite $N>0$}.
    \end{cases}
  \end{align}
%  $\gFIC(K') =$ Eq.~\eqref{eq:new-objective} for $N\to\infty$;
%  $\gFIC(K') \geq$ Eq.~\eqref{eq:new-objective} for a finite $N>0$.
\end{corollary}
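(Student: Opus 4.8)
The plan is to obtain the two statements by chaining the inequalities already established in this section and then showing that every one of them becomes tight in the large-sample limit. Throughout I treat $q^{*}=p(\Z\mid\X,K)$ and $\ML{\Both}$ as the quantities that define $\gFIC(K')$ in Eq.~\eqref{eq:GFIC}, and I compare them against the mean-field $q\in\qspace$ and the expected MJLE $\EML{\Both}(q)$ that appear in the algorithmic objective Eq.~\eqref{eq:new-objective}.

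For the finite-$N$ inequality I would start from Eq.~\eqref{eq:objective}, which, after dropping the non-negative KL term in Eq.~\eqref{eq:equilibrium}, already gives $\gFIC(K')\geq\E_{q}[\ajl(\Z,\ML{\Both},\kappa(q))]+H(q)$ for \emph{every} distribution $q$, in particular for every $q\in\qspace$. Next I would replace the per-sample MJLE $\ML{\Both}$ by the expected MJLE $\EML{\Both}(q)$. For the log-likelihood part of $\ajl$ this is precisely the Jensen step of the ``Update of $\Both$'' paragraph: since $\ML{\Both}$ maximizes $\log p(\X,\Z\mid\Both)$ pointwise, $\E_{q}[\log p(\X,\Z\mid\ML{\Both})]=\E_{q}[\max_{\Both}\log p(\X,\Z\mid\Both)]\geq\max_{\Both}\E_{q}[\log p(\X,\Z\mid\Both)]=\E_{q}[\log p(\X,\Z\mid\EML{\Both})]$, so the substitution can only lower the objective. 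This yields $\gFIC(K')\geq\E_{q}[\ajl(\Z,\EML{\Both}(q),\kappa(q))]+H(q)$ for every $q\in\qspace$, and since this holds for all such $q$ it holds for the maximizer; taking the maximum of the right-hand side over $\qspace$ gives $\gFIC(K')\geq$ Eq.~\eqref{eq:new-objective}. Note that restricting the feasible set to $\qspace$ only shrinks the maximum, so it works in favour of the $\geq$ direction.

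For the limit $N\to\infty$ I would argue that each inequality above collapses to an equality. First, Proposition~\ref{pro:independence} states that when $p(\Z\mid\X,K)$ is not degenerated in distribution, its optimal factorized approximation converges to $q^{*}$ itself; hence $\KL{q}{q^{*}}\to 0$, the step that discarded the KL divergence becomes exact, and restricting to $\qspace$ incurs no asymptotic loss. Second, Proposition~\ref{pro:both-consistency} gives $\ML{\Both}\convp\EML{\Both}$, so the Jensen inequality above becomes an equality in the limit and the two evaluations of $\ajl$ agree. Combining these, Eq.~\eqref{eq:new-objective} converges to $\E_{q^{*}}[\ajl(\Z,\ML{\Both},\kappa(q^{*}))]+H(q^{*})=\gFIC(K')$, which is the claimed equality.

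The main obstacle, which needs more care than the likelihood term, is the behaviour of the penalty $-\frac{1}{2}\log|\F_{\Local}|$ inside $\ajl$ under the substitution $\ML{\Both}\to\EML{\Both}(q)$: unlike the likelihood, the sign of the change in $\log|\F_{\Local}|$ is not controlled by Jensen's inequality, so the finite-$N$ inequality is not immediate for this term in isolation. The cleanest resolution I would pursue is to observe that $\F_{\Local}$ is $O(1)$ and enters $\ajl$ only through $\log|\F_{\Local}|$, and to evaluate it at the common point $\EML{\Both}(q)$ that the algorithm actually uses, so that the surviving inequality is carried entirely by the likelihood term; Proposition~\ref{pro:both-consistency} then guarantees that this choice agrees with $\F_{\ML{\Both}}$ asymptotically, closing the gap in the limit. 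Checking that this identification of the evaluation point preserves the finite-$N$ direction is the step I expect to require the most attention.
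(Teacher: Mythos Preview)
Your proposal is correct and follows the same route the paper takes: the paper justifies this corollary in a single sentence by invoking Propositions~\ref{pro:independence} and~\ref{pro:both-consistency} to say that Eq.~\eqref{eq:new-objective} lower-bounds Eq.~\eqref{eq:objective} (hence $\gFIC(K')$) with asymptotic equality, and you have faithfully unpacked that into the chain (drop the KL term) $\to$ (Jensen on $\ML{\Both}\to\EML{\Both}$) $\to$ (restrict to $\qspace$), with Propositions~\ref{pro:independence} and~\ref{pro:both-consistency} closing each gap as $N\to\infty$.

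Your closing observation about the $-\tfrac{1}{2}\log|\F_{\Local}|$ term is a point the paper does not address either: the Jensen step in the ``Update of $\Both$'' paragraph is stated only for the log-likelihood portion of $\ajl$, so the strict finite-$N$ direction for the penalty is not proved in the paper. Your proposed fix---treat $\F_{\Local}$ as evaluated at $\EML{\Both}(q)$ so that the inequality is carried solely by the likelihood term, and invoke Proposition~\ref{pro:both-consistency} for asymptotic agreement---is a reasonable way to make the claim rigorous, and goes beyond what the paper spells out.
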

The gFAB algorithm is the block coordinate ascent. Therefore, if
the pruning threshold $\delta$ is sufficiently small, each step monotonically increases
objective~\eqref{eq:new-objective}, and the algorithm stops at critical points.
%

%%%
A unique property of the gFAB algorithm is that it estimates the true
model $K'$ along with the updates of $q$ and $\EML{\Both}$. If $N$ is sufficiently
large and the initial model $K_{\max}$ is larger than $K'$, the
algorithm learns $p_{K'}(\Z)$ as $q$, according to
Proposition~\ref{pro:marginal-posterior}. At the same time, model
pruning removes degenerated $K-K'$ components. Therefore, if the
solutions converge to the global optima, the gFAB algorithm returns $K'$.
% A nice property of FIC is that it does not need to compute for all
% $\mathcal{K}$; the model pruning mechanism enables one-pass
% learning.
%
%In addition, start from sufficiently large $K_{\max}$ may efficiently
%avoid local minima.
%\begin{itemize}
%\item need more explanation and investigation.
%\end{itemize}
 
%-------------------------------------------------------
\subsection{How $\F_{\Both}$ Works?}\label{sec:how-model-pruning}

%A unique property of FIC is that it favors the sparse $q$. This is
%because of the penalty term $-\log |\F_{\Local}|_+$.
%%
%%\begin{itemize}
%%\item Explain why $-\log |\F_{\Local}|_+$ becomes large when
%%  $\E\z_{\cdot k}\simeq \0$.
%%\end{itemize}
%%
%This property enables to select appropriate $K$ during the
%optimization. In this section, we explain how such effect occurs.

Proposition~\ref{pro:marginal-posterior} shows that if the model is
not degenerated, objective~\eqref{eq:new-objective} is maximized at
$q(\Z)=p_K(\Z)\propto
p(\Z|\X,\ML{\Both})|\F_{\ML{\Both}}|^{-1/2}$, which is the product
of the unmarginalized posterior $p(\Z|\X,\ML{\Both})$ and the
gFIC penalty term $|\F_{\ML{\Both}}|^{-1/2}$. Since
$|\F_{\ML{\Both}}|^{-1/2}$ has a peak where $\Z$ is degenerating, it
changes the shape of $p(\Z|\X,\ML{\Both})$ and increases the
probability that $\Z$ is degenerated. 
Figure~\ref{fig:skew} illustrates how the penalty term affects the
posterior.
\begin{figure}[tb]
  \centering
  \includegraphics[trim=0 0 0 130,clip,width=1\linewidth]{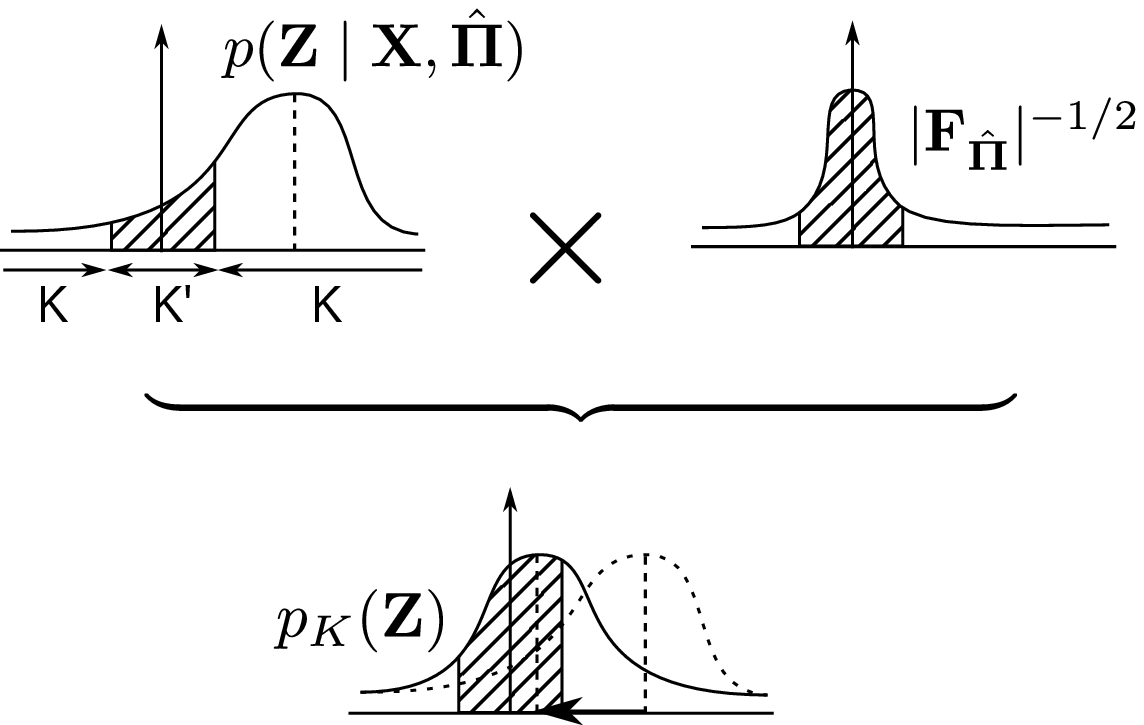}
  \caption{The gFIC penalty $|\F_{\ML{\Local}}|^{-1/2}$ changes the
    shape of the posterior $p(\Z\mid\X,\ML{\Both})$ as increasing the
    probability of degenerated $\Z$ (indicated by diagonal
    stripes).}
  \label{fig:skew}
\end{figure}
%

%The penalty term is interpreted as the sample estimate of the Fisher
%information matrix.\footnote{TODO: explain how this interpretaion works.}
%%
%\begin{proposition}\label{prop:fisher}
%  If $\X\sim p(\X\mid\Both)$, then $\F_{\Local}$ converges to the
%  Fisher information matrix in probability.
%%
%\end{proposition}

Note that, if the model family contains the true distribution of $\X$,
then $\F_{\ML{\Both}}$ converges to the Fisher information matrix.
From another viewpoint, $\F_{\ML{\Both}}$ is interpreted as the
covariance matrix of the asymptotic posterior of $\Both$.  As a result
of applying the Bernstein-von Mises theorem, the asymptotic normality
holds for the posterior $p(\Both|\X,\Z)$ in which the covariance is
given by $(N\F_{\ML{\Both}})^{-1}$.
\begin{proposition}\label{pro:normality}
  Let $\vOmega = \sqrt{N}(\Both - \ML{\Both})$. Then, if $\Z$ is not degenerated, 
  $|p(\vOmega\mid\X,\Z) - N(\0, \E[\F_{\ML{\Both}}]^{-1})| \convp 0.$
%%
%  \begin{align}
%    |p(\vOmega\mid\X,\Z) - N(\0, \E[\F_{\ML{\Both}}]^{-1})| \convp 0.
%  \end{align}
%%
\end{proposition}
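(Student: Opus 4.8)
The plan is to treat this as a Bernstein--von Mises statement and obtain it by a local quadratic expansion of the log-posterior about the MJLE. Writing $p(\Both\mid\X,\Z)\propto p(\X,\Z\mid\Both)\,p(\Both)$ and changing variables to $\vOmega=\sqrt{N}(\Both-\ML{\Both})$, I would Taylor-expand $\log p(\X,\Z\mid\Both)$ at $\ML{\Both}$. Since $\Z$ is not degenerated, \ref{asm:regularity} ensures that $\log p(\X,\Z\mid\Both)$ is concave with a unique maximizer $\ML{\Both}$ and $\det|\F_{\ML{\Both}}|<\infty$, so the first-order term vanishes at $\ML{\Both}$ and, using the definition of $\F_{\Both}$ as the negative normalized Hessian, the quadratic term equals $-\tfrac{N}{2}(\Both-\ML{\Both})^\T\F_{\ML{\Both}}(\Both-\ML{\Both})=-\tfrac12\vOmega^\T\F_{\ML{\Both}}\vOmega$.

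Next I would discard the prior and the Taylor remainder on the relevant scale. By \ref{asm:invariant} the prior is continuous and positive on the compact support $\pspace$, hence asymptotically constant on the shrinking neighborhood $\|\Both-\ML{\Both}\|=O(N^{-1/2})$ and, by \ref{asm:flat}, contributes only an $O(1)$ term to the log-density that cancels after normalization. On the same neighborhood the third- and higher-order terms are $O(N^{-1/2})$ and vanish in the limit, so the rescaled unnormalized density converges pointwise to $\exp(-\tfrac12\vOmega^\T\F_{\ML{\Both}}\vOmega)$.

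It then remains to replace the empirical Hessian by its expectation. Under \ref{asm:nindep} the joint log-likelihood factorizes sample-wise, so $\F_{\ML{\Both}}$ is an empirical average of per-sample negative Hessians; a (uniform) law of large numbers, together with the consistency $\ML{\Both}\convp\EML{\Both}$ in the spirit of Proposition~\ref{pro:both-consistency}, gives $\F_{\ML{\Both}}\convp\E[\F_{\ML{\Both}}]$. Substituting this limit and normalizing identifies the limiting density as $N(\0,\E[\F_{\ML{\Both}}]^{-1})$, which is the claimed convergence of $p(\vOmega\mid\X,\Z)$.

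The main obstacle is upgrading this pointwise, local statement to convergence of the full posterior in the sense of the stated $|\cdot|$: the quadratic expansion only controls the density in an $O(N^{-1/2})$ ball around $\ML{\Both}$, so I must separately show that the posterior places asymptotically negligible mass outside any fixed neighborhood of $\ML{\Both}$. This posterior-concentration step is where concavity in \ref{asm:regularity} and compactness of $\pspace$ in \ref{asm:invariant} are essential---concavity forces the log-likelihood to fall below its maximum by an amount growing linearly in $N$ away from $\ML{\Both}$, while compactness prevents escape of mass to the boundary---together supplying the uniform integrability needed to turn pointwise convergence of densities into convergence of the distribution.
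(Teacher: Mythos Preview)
Your proposal is correct and matches the paper's approach: the paper does not give a detailed proof of this proposition but simply states that it follows ``as a result of applying the Bernstein--von Mises theorem,'' and your sketch is precisely the standard Bernstein--von Mises argument (local quadratic expansion at $\ML{\Both}$, asymptotic negligibility of the prior via \ref{asm:flat}/\ref{asm:invariant}, LLN for the Hessian via \ref{asm:nindep}, and posterior concentration from concavity and compactness) that the paper leaves implicit.
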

This interpretation has the following implication. In
maximizing the variational lower bound~\eqref{eq:Free energy at q*}, we
maximize $-\frac{1}{2}\log|\F_{\Local}|$. In the gFAB algorithm, this is
equivalent to maximize the posterior covariance and pruning the
components where those covariance diverge to infinity. Divergence of the
posterior covariance means that there is insufficient information to
determine those parameters, which are not necessary for the model and
thus can reasonably be removed.

%%%
%Proposition~\ref{pro:independence} implies that the effect of
%$|\F_{\ML{\Both}}|^{-1/2}$ is ignorable when $N$ is large. This is
%reasonable because the effect of $p(\Both)$ is constant
%(\ref{asm:flat}) and its regularization effect is eventually
%vanishing. However, $\F_{\ML{\Local}}$ has a significant influence for
%the sparseness of the posterior. Because
%$\Local=\{\local_1,\dots,\local_K\}$ are component-dependent
%parameters, $\F_{\ML{\Local}}$ contains information about the usage of
%$\z_{\cdot k}$. For example, in a MM, $\F_{\ML{\local_k}}=\sum_n
%z_{nk} \cdot (-\nabla\nabla \log p_k(\x_n\mid \ML{\local_k}))$. 

%%% Local Variables:
%%% mode: latex
%%% TeX-master: "fabcont.tex"
%%% End:

%=======================================================
\section{Relationship with VB}

Similarly to FAB, VB alternatingly optimizes with respect to $\Z$ and
$\Both$, whereas VB treats both of them as distributions. 
%
%%\subsection{Equivalence with VB method when $\model =\model'$}
Suppose $\model \le\model'$, i.e., the case when the posterior $p(\Z
\mid \X, \model')$ is not degenerated in distribution.  Then, the
marginal log-likelihood is written by the variational lower bound: $\log p(\X\mid\model')=$
\begin{align}
&\E_{q(\Z,\Both)}[\log p(\X,\Z,\Both \mid \model')] + H(q(\Z,\Both)) \nonumber \\
&+ \KL{q(\Z, \Both)}{p(\Z, \Both \mid\X,\model')} \nonumber \\
\ge& \E_{q(\Z,\Both)}[\log p(\X,\Z,\Both \mid \model')] + H(q(\Z,\Both))  \nonumber \\
\ge& \E_{q(\Z)q(\Both)}[\log p(\X,\Z,\Both \mid \model')] + H(q(\Z))+H(q(\Both)), \label{eq:fully-vb}
\end{align}
where we use the mean-field approximation $q(\Z,\Both)=q(\Both)q(\Z)$
in the last line.  Minimizing the KL divergence yields the maximizers
of Eq.~\eqref{eq:fully-vb}, given as % \prod _{n=1}^{N}q(z_n)
\begin{align}
\tilde q(\Both) &\propto \exp\left( \E_{q(\Z)} \left[\log p(\X,\Z , \Both\mid \model') \right]\right), \label{eq:VB update Pi} 
\\
\tilde q(\Z) &\propto \exp\left( \E_{q(\Both)} \left[\log p(\X,\Z,\Both\mid \model') \right] \right). \label{eq:VB update Z} 
\end{align}

%%%
Here, we look inside the optimal distributions to see the relationship
with the gFAB algorithm. Let us consider to restrict the density
$q(\Both)$ to be Gaussian. Since $\E_{q(\Z)} \left[\log p(\X,\Z \mid
  \Both, \model') \right]$ increases proportional to $N$ while $\log
p(\Both)$ does not, $\tilde{q}(\Both)$ attains its maximum around
$\EML{\Both}$. Then, the second order expansion to $\log
\tilde{q}(\Both)$ at $\EML{\Both}$ yields the solution $\tilde
q(\Both) = N(\EML{\Both}, (N\F_{\EML{\Both }})^{-1})$.
%\footnote{The
%  factor $N$ is included in the covariance, because the negative
%  Hessian of $\log p(\X,\Z \mid \Both, \model')$ is $N\F_{\Both}$; see
%  the definition of $\F_{\Both}$.}
We remark that this solution can
be seen as an empirical version of the asymptotic normal posterior
given by Proposition~\ref{pro:normality}.  Then, if we further
approximate $\log p(\X,\Z \mid \Both, \model') $ by the second order
expansion at $\Both = \EML{\Both}$, the other expectation
$\E_{q(\Both)} \left[\log p(\X,\Z \mid \Both, \model') \right]$
appearing in Eq.~\eqref{eq:VB update Z} is evaluated by $\log p(\X,\Z \mid
\EML{\Both}, \model') - \frac{1}{2}\log |\F_{\EML{\Both }}| $.  Under
these approximations, alternating updates of $\{ \EML{\Both },
F_{\EML{\Both }} \}$ and $\tilde q(\Z)$ coincide exactly with the
gFAB algorithm\footnote{Note that model pruning is not
  necessary when $\model \le \model'$.}, which justifies the VB lower
bound as an asymptotic expansion of $p(\X\mid K')$.
\begin{proposition}\label{pro:vbconsistency}
  Let $L_{\mathrm{VB}}(K)$ be the VB lower bound~\eqref{eq:fully-vb}
  with restricting $q(\Both)$ to be Gaussian and approximating the
  expectation in $\log \tilde{q}(\Z)$ by the second order
  expansion. Then, for $\model \le\model'$, $\log p(\X\mid K) = L_{\mathrm{VB}}(K) + O(1).$
%  \begin{align}\label{eq:vbconsistency}
%    \log p(\X\mid K) = L_{\mathrm{VB}}(K) + O(1).
%  \end{align}
\end{proposition}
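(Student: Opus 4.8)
The plan is to evaluate the restricted lower bound $L_{\mathrm{VB}}(K)$ at its optimum, show it collapses onto the gFAB objective~\eqref{eq:new-objective} up to a constant, and then chain through Theorem~\ref{thm:consistency}. Throughout, the hypothesis $\model\le\model'$ keeps us in the non-degenerate regime: $p(\Z\mid\X,K)$ is non-degenerate in distribution, so $\kappa(p(\Z\mid\X,K))=K$, model pruning never fires, and both Laplace's method (Lemma~\ref{lem:laplace}) and the asymptotic normality of Proposition~\ref{pro:normality} are available.

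First I would substitute the optimal Gaussian factor into~\eqref{eq:fully-vb}. By Proposition~\ref{pro:normality}, the Gaussian-restricted maximizer of~\eqref{eq:VB update Pi} is $\tilde q(\Both)=N(\EML{\Both},(N\,\E_{q(\Z)}[\F_{\EML{\Both}}])^{-1})$, whose differential entropy is $H(\tilde q(\Both))=-\tfrac{D_{\Both}}{2}\log N-\tfrac12\log|\E_{q(\Z)}[\F_{\EML{\Both}}]|+O(1)$. For the energy term I would Taylor-expand $\log p(\X,\Z\mid\Both,K)$ to second order about $\EML{\Both}$ and take $\E_{\tilde q(\Both)}$: the linear term drops because the mean of $\tilde q(\Both)$ is $\EML{\Both}$, the quadratic term yields $-\tfrac12\mathrm{tr}(\F_{\EML{\Both}}(\Z)\,\E_{q(\Z)}[\F_{\EML{\Both}}]^{-1})$ whose $q(\Z)$-average is the constant $-\tfrac{D_{\Both}}{2}$, and the residual Gaussian fourth-moment remainder is $O(N^{-1})$; the prior is $O(1)$ by~\ref{asm:flat} and~\ref{asm:invariant}. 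Collecting terms,
\begin{align*}
L_{\mathrm{VB}}(K)=\E_{q(\Z)}\!\big[\log p(\X,\Z\mid\EML{\Both},K)\big]-\tfrac12\log\big|\E_{q(\Z)}[\F_{\EML{\Both}}]\big|-\tfrac{D_{\Both}}{2}\log N+H(q(\Z))+O(1).
\end{align*}

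Next I would reconcile this with~\eqref{eq:new-objective}, whose penalty reads $-\tfrac12\E_{q(\Z)}[\log|\F_{\Local}|]$. The gap between the two penalties splits into two pieces, both of constant order in the non-degenerate regime. First, passing from the full Hessian to its $\Local$-block costs only the Schur-complement log-determinant $\log|\F_{\Global}-\F_{\Global,\Local}\F_{\Local}^{-1}\F_{\Global,\Local}^\T|$, a fixed-size quantity of order $O(1)$, exactly as when~\eqref{eq:laplace} is extracted from Lemma~\ref{lem:laplace}. Second, the Jensen gap $\log|\E_{q(\Z)}[\F]|-\E_{q(\Z)}[\log|\F|]$ is $O(1)$ and in fact $o(1)$, since $\F_{\EML{\Both}}$ is a sample average that concentrates at its mean by the law of large numbers under~\ref{asm:nindep} and~\ref{asm:regularity}. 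Because the surrounding text—supported by Propositions~\ref{pro:independence} and~\ref{pro:both-consistency}—shows that the Gaussian-VB and gFAB alternating updates share the same fixed point, I can evaluate both objectives at that common $(q(\Z),\EML{\Both})$ and conclude $L_{\mathrm{VB}}(K)=\text{(maximum of~\eqref{eq:new-objective})}+O(1)$.

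Finally, the corollary following~\eqref{eq:new-objective} identifies that maximum with $\gFIC(K')$ asymptotically, and Theorem~\ref{thm:consistency} gives $\log p(\X\mid K)=\gFIC(K')+O(1)$; since $K'=\kappa(p(\Z\mid\X,K))=K$ here, composing these yields $\log p(\X\mid K)=L_{\mathrm{VB}}(K)+O(1)$. I expect the main obstacle to lie in controlling the two approximation errors simultaneously—the Gaussian restriction together with the second-order expansion, and the residual mean-field KL term—and in showing their total is genuinely $O(1)$ rather than growing with $N$. This rests on posterior concentration at the $N^{-1/2}$ rate, so that the cubic and quartic Taylor remainders are $o(1)$, and on the uniform boundedness of $\log|\F_{\EML{\Both}}|$ near the optimum; the latter could break down if the optimal $q(\Z)$ placed mass on near-degenerate configurations, which is precisely what the assumption $\model\le\model'$ rules out.
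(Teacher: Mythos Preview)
Your proposal is correct and follows essentially the same route as the paper: substitute the Gaussian $\tilde q(\Both)=N(\EML{\Both},(N\F_{\EML{\Both}})^{-1})$ and the second-order expansion into the VB bound, observe that the resulting objective and updates coincide with the gFAB objective~\eqref{eq:new-objective}, and conclude via Theorem~\ref{thm:consistency} (with $K'=K$ since $K\le K'$). Your explicit bookkeeping of the Schur-complement term and the Jensen gap $\log|\E_{q}[\F]|-\E_{q}[\log|\F|]$ makes rigorous two $O(1)$ discrepancies that the paper absorbs informally.
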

Proposition~\ref{pro:vbconsistency} states that the VB approximation is
asymptotically accurate as well as gFIC when the model is not
degenerated. For the degenerated case, the asymptotic behavior of
$L_{\mathrm{VB}}(K)$ of general LVMs is unclear; however, a few specific models
such as Gaussian MMs~\cite{watanabe06} and reduced rank
regressions~\cite{watanabe09} have been analyzed in both degenerated
and non-degenerated cases.

Proposition~\ref{pro:vbconsistency} also suggests that the mean-field
approximation does not loose the consistency with $p(\X\mid K)$. As
shown in Proposition~\ref{pro:normality}, for $K\leq K'$, the
posterior covariance is given by $(N\F_{\ML{\Both}})^{-1}$, which goes
to $0$ for $N\to\infty$, i.e., the posterior converges to a
point. Therefore, mutual dependence among $\Z$ and $\Both$ eventually
vanishes in the posterior, and the mean-field assumption holds
asymptotically.
This observation also allows further employment of the mean-field
approximation to $q(\Both)$.  For example, BPCA has two parameters
$\Both=\{\W,\lambda\}$ (see Section~\ref{sec:bpca}), in which the joint
distribution $\tilde q(\W, \lambda)$ has no analytical solution.
However, the independence assumption $q(\W, \lambda) =
q(\W)q(\lambda)$ gives us analytical solutions of $\tilde q(\W )$,
$\tilde q(\lambda)$, and $\tilde q(\Z)$ under suitable conjugate
priors. As discussed above, since both $\W$ and $\lambda$ converge to
points, this approximation still maintains Proposition~\ref{pro:vbconsistency}.

%\subsection{VB method when $\model > \model'$}
%When $\model > \model'$, $p(\Z \mid \X, \model')$ is degenerated and 
%$\E_{ p(\Z \mid \X, \model')} \left[\log p(\X,\Z \mid \Both, \model') \right]$ 
%cannot be approximated by a quadratic form. 
%However, in practice, $q(\Z) \neq p(\Z \mid \X, \model')$, and just $q(\Z)$ approaches $ p(\Z \mid \X, \model')$.
%So, suppose the case where $q(\Z)$ is represented as a mixture of a non-degenerated distribution 
%$r(\Z)$ and the degenerated distribution $ p(\Z \mid \X, \model')$ as
%$q(\Z) = \varepsilon r(\Z) + (1-\varepsilon) p(\Z \mid \X, \model')$ where $\varepsilon \in [0,1]$ is a mixture weight. 
%Then the lower bound is obtained as
%\begin{align}
%&\log p(\X\mid\model') \nonumber  \\ 
%\ge& \E_{q(\Z)q(\Both)}[\log p(\X,\Z,\Both \mid \model')] + H(q(\Z,\Both)). \label{eq:fully-vb}
%\end{align}
%Since we know that the approximated marginal log-likelihood is written
%as $\max_{q(\Z)}\E_{q(\Z)}[\log p(\X,\Z \mid \EML{\Both }, \model') -
%\frac{1}{2}\log |F_{\EML{\Both }}|] + H(q(\Z)) -\frac{ D_{\Both
%  }}{2}\log N + const$.

%
%
%
%$\E_{ q(\Z)} \left[\log p(\X,\Z \mid \Both, \model') \right]$ might be approximated by
%$|F_{1}|$ cannot be evaluated. However, the update 
% algorithmcally $\ML{\Both}$ 

%%% Local Variables:
%%% mode: latex
%%% TeX-master: "fabcont.tex"
%%% End:

\begin{landscape}

\begin{table*}[tb]
  \centering
  \begin{tabular}{|c||c|c|c|c|c|c|}
\hline
    & EM & BICEM$^\dag$ & VB& CVB& FAB & gFAB$^\dag$ \\\hline\hline
Objective&Eq.~\eqref{eq:obj-EM}&Eq.~\eqref{eq:obj-EM}$-\frac{D_{\Both}}{2}\log N$&Eq.~\eqref{eq:fully-vb}&Eq.~\eqref{eq:Free energy at q*}&\multicolumn{2}{c|}{Eq.~\eqref{eq:objective}}
\\\hline
$\Both$&\multicolumn{2}{c|}{Point estimate}&Posterior w/ MF&Marginalized out&\multicolumn{2}{c|}{Laplace approximation}
\\\hline
$q(\Z)$&\multicolumn{2}{c|}{$=p(\Z\mid\X,\ML{\Both})$}&\multicolumn{2}{c|}{$\simeq p(\Z\mid\X)$}&\multicolumn{2}{c|}{$\propto p(\Z\mid\X)(1+O(1))^\dag$}
\\\hline
$\log p(\X| K\le K')$& $O(\log N)$ & $O(1)^\dag$& \multicolumn{2}{c|}{$O(1)^\dag$} & \multicolumn{2}{c|}{$O(1)^\dag$}
\\\hline
$\log p(\X| K > K')$& \multicolumn{2}{c|}{NA} & \multicolumn{2}{c|}{Generally NA} & \multicolumn{2}{c|}{$O(1)^\dag$}
\\\hline
%Sparsification of $K$&\multicolumn{2}{c|}{None}&\multicolumn{2}{c|}{MF-induced sparseness~[Nakajima+]}&\multicolumn{2}{c|}{$-\frac{1}{2}|\F_{\Local}|$}
%\\\hline
Applicability&\multicolumn{2}{c|}{Many models}&Many models&Binary LVMs&Binary LVMs&LVMs
\\
\hline
  \end{tabular}
  \caption{A comparison of approximated Bayesian methods. The symbol $\dag$ highlights our contributions. ``MF'' stands for the mean-field approximation. Note that the asymptotic relations with $\log p(\X\mid K)$ hold only for LVMs. }
  \label{tab:comp}
\end{table*}

\end{landscape}
\section{Related Work}

\paragraph{The EM Algorithm}\label{sec:em-algorithm}

Algorithm~\ref{alg:gFAB} looks quite 
similar to the EM algorithm, solving
\begin{align}\label{eq:obj-EM}
  \max_{q,\Both}\E_q[\log p(\X\mid\Both, K)] + H(q).
\end{align}
We see that both gFAB and EM algorithms iteratively update the
posterior-like distribution of $\Z$ and estimate $\Both$.  The
essential difference between them is that the EM algorithm infers the
posterior $p(\Z|\X,\ML{\Both})$ in the E-step, but the gFAB algorithm
infers the \emph{marginal} posterior $p(\Z|\X)\simeq
p(\Z|\X,\ML{\Both})|\F_{\ML{\Both}}|^{-1/2}$. As discussed in
Section~\ref{sec:how-model-pruning}, the penalty term
$|\F_{\ML{\Both}}|^{-1/2}$ increases the probability mass of the
posterior, where $\Z$ is degenerating, enabling automatic model
determination through model pruning. 
In contrast, the EM algorithm lacks such pruning mechanism, 
and always overfits to $\X$ as long as $N$ is finite
while $p(\Z|\X)$ eventually
converges to $p(\Z|\X,\ML{\Both})$ for $N\to\infty$ (see
Proposition~\ref{pro:independence}).
% Although $p(\Z|\X)$ eventually
%converges to $p(\Z|\X,\ML{\Both})$ for $N\to\infty$ (see
%Proposition~\ref{pro:independence}), $N$ is of course finite in real
%data, and $|\F_{\Local}|^{-1/2}$ makes a substantial difference as
%described in Section~\ref{sec:how-model-pruning}.

%%%
Note that Eq.~\eqref{eq:obj-EM} has $O(\log N)$ error against $\log
p(\X)$. Analogously to gFIC, this error is easily reduced to $O(1)$ by
adding $-\frac{D_{\Both}}{2}\log N$. This modification provides
another information criterion, which we refer to as \emph{BICEM}. 

\paragraph{VB Methods}

%%%
%VB provides a similar framework as FAB in terms of that VB also keeps
%the information of $\Both$ as a distribution. VB is commonly applied
%with the mean-field (MF) approximation, e.g.,
%$q(\Z,\Both)=q(\Z)q(\Both)$, which is known to have another sparsity
%effect~[Nakajima etal]. However, because of the independence
%assumption, the obtained posterior is usucally
%inconsistent~\cite{wang04}, and its objective function is also
%incosistent to $\log p(\X)$.
The relationship between the VB and gFAB algorithms is
discussed in the previous section. 

%%%
Collapsed VB (CVB)~\cite{teh06} is a variation of VB. Similarly to FAB,
CVB takes the variational bound after marginalizing out $\Both$ from
the joint likelihood. In contrast to FAB, CVB approximates $q$ in a
non-asymptotic manner, such as the first-order Taylor
expansion~\cite{asuncion09}. Although such approximation has
been found to be accurate in practice, its asymptotic properties, such
as consistency, have not been explored.
Note that as one of those approximations, the mean-field assumption
$q(\Z)\in\qspace$ is used in the original paper on CVB~\cite{teh06},
motivated by the intuition that the dependence among $\{\z_n\}$ is
weak after marginalization. Proposition~\ref{pro:independence}
formally justifies this asymptotic independence assumption on the
marginal distribution employed in CVB.
%
%Figure~\ref{fig:compq} illustrates the difference of infered $q$.
%
%\begin{figure}[tb]
%  \centering
%  \includegraphics[width=.9\linewidth]{}
%  \caption{Comparison of what does infer for $q(\Z)$.}
%  \label{fig:compq}
%\end{figure}
%

Several authors have studied about asymptotic behaviors of VB methods
for LVMs. \citet{wang04} investigated the VB approximation for linear
dynamical systems (a.k.a. Kalman filter) and showed the inconsistency
of VB estimation with large observation noise. \citet{watanabe06}
derived an asymptotic variational lower bound of the Gaussian MMs and
demonstrated its usefulness for the model selection.  Recently,
\citet{nakajima14} analyzed the VB learning on latent Dirichlet
allocation (LDA)~\cite{blei03}, who revealed conditions for the consistency and clarified
its transitional behavior of the parameter sparsity.
By comparing with these existing works, we have a contribution
in terms of that our asymptotic analysis is valid for \emph{general} LVMs, rather than
individual models.

\paragraph{BIC and Extensions}

Let $\Y=(\X,\Z)$ be a pair of non-degenerated $\X$ and $\Z$.  By
ignoring all the constant terms of Laplace's
approximation~\eqref{eq:laplace-original}, we obtain
BIC~\cite{schwarz78} considering $\Y$ as an observation, which is
given by the right-hand side of the following equation.
\begin{align*}
  \log p(\Y\mid K) = \log p(\Y\mid \ML{\Both}, K) - \frac{D_{\Both}}{2}\log N + O(1).
\end{align*}
Unfortunately, the above relation does not hold for $p(\X\mid K)$. Since
$p(\X\mid K)=\int p(\Y\mid K)\d\Z$ mixes up degenerated and
non-degenerated cases, $p(\X\mid K)$ always becomes
singular, loosing the condition~\ref{asm:regularity} that
Laplace's approximation holds.

%%%
There are several studies that extend BIC to be able to deal with
singular models. 
\citet{watanabe09} evaluates $p(\X\mid K)$ with
an $O(1)$ error for any singular models by using algebraic geometry.
However, it requires an evaluation of the intractable rational number
called the real log canonical threshold.
Recent study~\cite{watanabe13} relaxes this intractable evaluation
to the evaluation of criterion called WBIC at the expense of an $O_p(\sqrt{\log N})$ error.
Yet, the evaluation of WBIC needs an expectation with respect to 
a practically intractable distribution, which usually incurs heavy computation.

%WBIC~\cite{watanabe13} asymptotically evaluates the
%effect of marginalization on singular models using a rational number
%called the real log canonical threshold (RLCT) in terms of algebraic
%geometry. WBIC provides an asymptotic expansion of $p(\X\mid K)$ with
%an $O(1)$ error for any singular models, including
%LVMs. Unfortunately, the computation of RLCT is intractable for some
%models, hindering the applicability of WBIC.  \citet{drton13} proposed
%a similar framework.

%\paragraph{Bayesian nonparametric models }
%
%computationally heavy, may overestimate
%the model complexity~[NIPS13].

%%% Local Variables:
%%% mode: latex
%%% TeX-master: "fabcont.tex"
%%% End:

\section{Numerical Experiments}\label{sec:experiments}

We compare the performance of model selection for BPCA explained in
Section~\ref{sec:bpca} with the \texttt{EM} algorithm, \texttt{BICEM}
introduced in Section~\ref{sec:em-algorithm}, simple VB
(\texttt{VB1}), full VB (\texttt{VB2}), and the \texttt{gFAB}
algorithm.
\texttt{VB2} had the priors for $\W,\lambda$, and $\valpha$ 
described in Section~\ref{sec:bpca} in which the hyperparameters were
fixed as $a_{\lambda}=b_{\lambda}=a_{\valpha}=b_{\valpha}=0.01$ by
following~\cite{bishop99}. 
\texttt{VB1} is a simple variant of \texttt{VB2}, which fixed
$\valpha=\1$.
In this experiments, We used the synthetic data $\X=\Z\W^\T + \vE$
where $\W\sim \mathrm{uniform}([0,1])$\footnote{This setting could be
  unfair because \texttt{VB1} and \texttt{VB2} assume the Gaussian
  prior for $\W$. However, we confirmed that data generated by $\W\sim
  N(0,1)$ gave almost the same results.}, $\Z\sim N(\0, \I)$, and
$E_{nd}\sim N(0, \sigma^2)$. Under the data dimensionality $D=30$ and
the true model $K'=10$, we generated data with $N=100,500,1000,$ and
$2000$.
We stopped the algorithms if the relative error was less than
$10^{-5}$ or the number of iterations was greater than $10^4$.
\begin{figure}[tb]
  \centering
  \includegraphics[trim=0 0 0 20,clip,width=.9\linewidth]{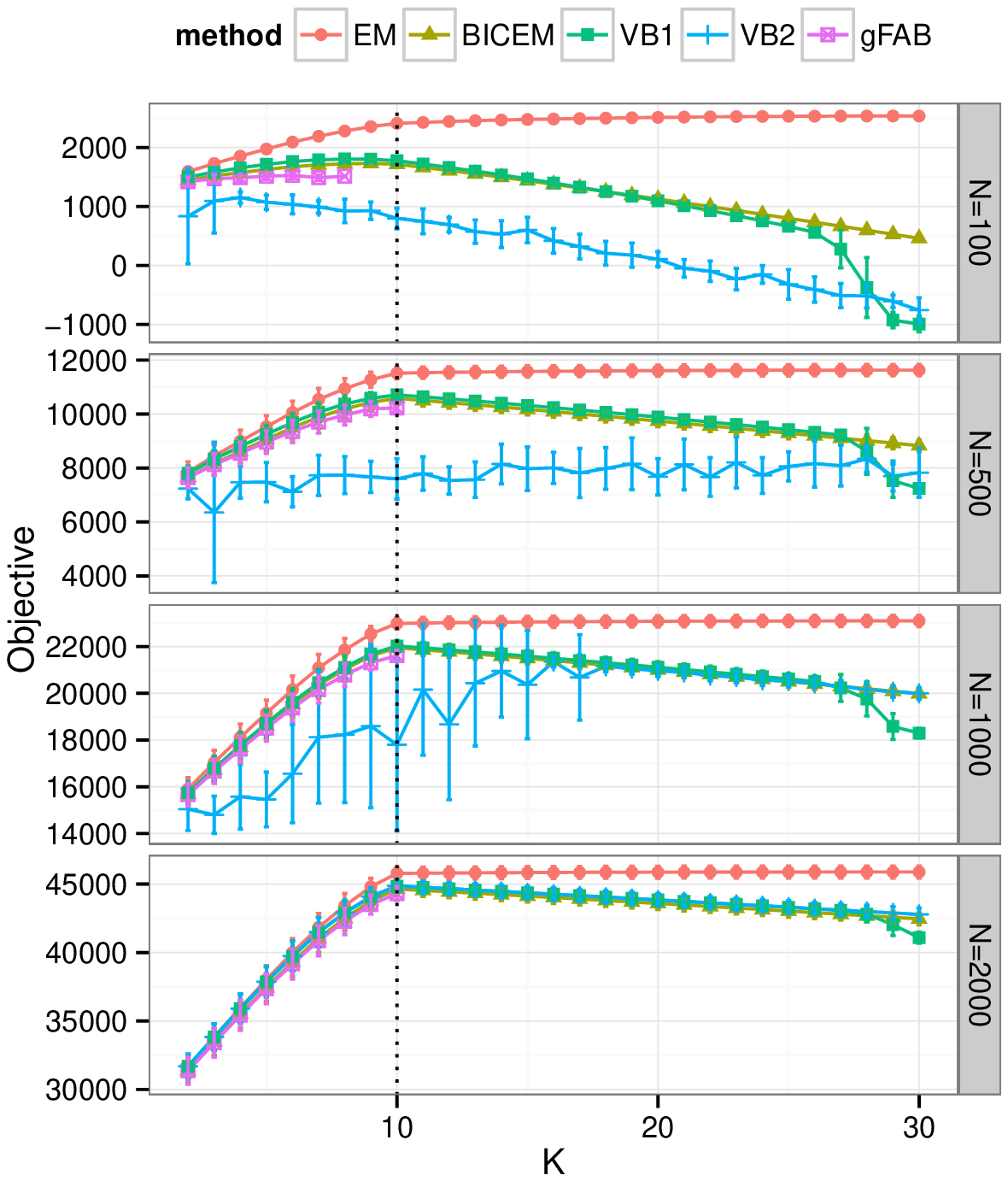}
  \caption{The objective function versus the model $K$. The errorbar
    shows the standard deviations over 10 different random seeds, 
    which affect both data and initial values of the algorithms.}
  \label{fig:ex-toy}
\end{figure}

%%%
Figure~\ref{fig:ex-toy} depicts the objective functions after
convergence for $K=2,\dots,30$.
Note that, we performed \texttt{gFAB} with $K=30$ and it finally
converged at $K\simeq 10$ owing to model pruning, which allowed us to
skip the computation for $K\simeq 10,\dots,30$, and the objective
values for those $K$s are not drawn. We see that \texttt{gFAB}
underestimated the model when the number of samples were small ($N<
500$), but it successfully chose $K=10$ with sufficiently large sample
sizes ($N\geq 500$).
In contrast, the objective of \texttt{EM} slightly but monotonically
increased with $K$, which means \texttt{EM} always chose the largest
$K$ as the best model. This is because \texttt{EM} maximizes
Eq.~\eqref{eq:obj-EM}, which does not impose the penalty on
the model complexity brought by the marginalization of $\Both$. 
As our analysis suggested in Section~\ref{sec:em-algorithm},
\texttt{BICEM} and \texttt{VB1} are close to \texttt{gFAB} as $N$
increasing and has a peak around $K'=10$, meaning that \texttt{BICEM}
and \texttt{VB1} are adequate for model selection. However, in
contrast to \texttt{gFAB}, both of them need to compute for all $K$.
Interestingly, \texttt{VB2} were unstable for $N<2000$ and it gave the
inconsistent model selection results. We observed that \texttt{VB2}
had very strong dependence on the initial values. This behavior is
understandable because \texttt{VB2} has the additional prior and
hyperparameters to be estimated, which might produce additional local
minima that make optimization difficult.

%%% Local Variables:
%%% mode: latex
%%% TeX-master: "fabcont.tex"
%%% End:

\section{Conclusion}

This paper provided an asymptotic analysis for the marginal
log-likelihood of LVMs. As the main contribution, we proposed gFIC for
model selection and showed its consistency with the marginal
log-likelihood. Part of our analysis also provided insight into the EM
and VB methods. Numerical experiments confirmed the validity of our
analysis.

We remark that gFIC is potentially applicable to many other LVMs,
including factor analysis, LDA, canonical correlation analysis, and
partial membership models.  Investigating the behavior of gFIC on
these models is an important future research direction.

%Future work:
%\begin{itemize}
%\item Explore other models such as FA, LDA?, CCA?, partial membership?
%\item Extension for the case that $p(\Both)$ depends on $N$, which is
%  applicable to nonparametric models (prior is $O(N)$ and use MAP
%  instead of MJLE like generalized BIC.)
%\end{itemize}

%%% Local Variables:
%%% mode: latex
%%% TeX-master: "fabcont.tex"
%%% End:

\bibliography{fabcont}
\bibliographystyle{abbrvnat}

\appendix
\section{Proofs}

\begin{proof}[Proof of Proposition~\ref{pro:marginal-posterior}]
  If $\Z$ is not degenerated, then Laplace's method yields
  Eq.~\eqref{eq:laplace}. By collecting from Eq.~\eqref{eq:laplace} the
  terms that depend on $\Z$, we obtain
  \begin{align}
p(\Z\mid\X,K) \propto    p(\Z,\X\mid\ML{\Both},K)|\F_{\ML{\Both}}|^{-1/2}(1+O(N^{-1})).
\end{align}
%

%%%
  If $p(\Z\mid\X, K)$ is degenerated, we consider
  the transformation~\eqref{eq:joint-transform}.
%%
%\begin{align}
%  p(\X,\Z\mid K) 
%&= \int p(\X,\Z\mid\Both,K) p(\Both\mid K)\d\Both
%\\
%&= \int p(\X,\tilde{\Z}_{K'}\mid\tilde{\Both}_{K'},K') \tilde{p}(\tilde{\Both}_{K'}\mid K')\d\Both_{K'}\label{eq:joint-transform}.
%\end{align}
%%
Here, the transformed prior $\tilde{p}(\Both_{K'}\mid K')$ would
differ from the original prior $p(\Both_{K'}\mid K')$. However, since
the mapping $\Both\to\tilde{\Both}_{K'}$ is onto~\ref{asm:switch} and
the prior is strictly positive in the whole space of
$\Both$~\ref{asm:invariant}, $\tilde{p}(\Both\mid K')$ is also
strictly positive, including $\ML{\Both}_{K'}=\argmax_{\Both_{K'}}\ln
p(\X,\tilde{\Z}_{K'}\mid\Both_{K'},K')$. Consequently, we can again
use Laplace's method for $\ln
p(\X,\tilde{\Z}_{K'}\mid\ML{\Both}_{K'},K')$, and by collecting the
terms that depend on $\Z$, we obtain
\begin{align}
  p(\X\mid\Z, K) 
  &\propto
  p(\X,\tilde{\Z}_{K'}\mid\ML{\Both}_{K'},K')|\F_{\ML{\Both}_{K'}}|^{-1/2}(1+O(N^{-1}))
\\
  &\propto
  p_{K'}(\tilde{\Z}_{K'},K')(1+O(N^{-1})).
\end{align}
This concludes the proof.
\end{proof}

\begin{proof}[Proof of Theorem~\ref{thm:consistency}]
  First, we prove the case that $p(\Z\mid\X, K)$ is not degenerated.
  In that case, Laplace's approximation yields Eq.~\eqref{eq:laplace}
  in probability, and substituting Eq.~\eqref{eq:laplace} into
  \eqref{eq:Free energy at q*} gives \eqref{eq:GFIC-consterror}.

%%%
  If $\kappa(p(\Z\mid\X, K))=K'<K$,
  Proposition~\ref{pro:marginal-posterior} gives us that $p(\Z\mid\X,
  K)=p_{K'}(\Z)(1+O(N^{-1}))$.  Since
  \begin{align*}
    \E_{p(\Z\mid\X, K)}[\log p(\X,\Z\mid K)] = \E_{p_{K'}}[\log
    p(\X,\Z\mid K)] + O(1)
  \end{align*}
 and 
 \begin{align*}
   H(p(\Z\mid\X, K))
   &=(1+O(N^{-1}))H(p_{K'}) + (1+O(N^{-1}))\log(1+O(N^{-1}))
   \\
   &=H(p_{K'})+O(1),
 \end{align*}
 $\log p(\X\mid K)$ is rewritten by
  \begin{align}
 &\E_{p_{K'}}[\log p(\X,\Z\mid K)] + H(p_{K'}) + O(1)
\\
=&\E_{p_{K'}}[\ajl(\ML{\Z}_{K'},\tilde{\Both}_{K'},K')] + H(p_{K'}) 
 + O(1)
  \end{align}
  Here, since the projection
  $\vT_{K'}: \Z\to\tilde{\Z}_{K'}$ is continuous and
  onto~(\ref{asm:switch}), we can describe $p_{K'}(\Z)$ as the density
  of $\Z_{K'}$ by using a change of variables, which we denote by
  $\tilde{p}_{K'}(\Z_{K'})$. Now, we can rewrite the first term as the
  integral over $\Z_{K'}$, i.e.,
\begin{align}
  \E_{p_{K'}}[\ajl(\tilde{\Z}_{K'},\ML{\Both}_{K'},K')]
 =&\int \ajl(\vT_{K'}(\Z),\ML{\Both}_{K'},K')p_{K'}(\vT_{K'}(\Z))\d\Z
\\
 =&\int \ajl(\Z_{K'},\ML{\Both}_{K'},K')\tilde{p}_{K'}(\Z_{K'})\d\Z_{K'}.\label{eq:a2}
\end{align}
Similarly, $\gFIC(K')$ is rewritten using
Proposition~\ref{pro:marginal-posterior} as 
\begin{align}
\gFIC(K')=  \E_{p_{K'}}[\ajl(\Z_{K'}, \ML{\Both}_{K'}, K')] + H(p_{K'}) + O(1)
\end{align}
Again, the first term is written as
\begin{align}
  \E_{p_{K'}}[\mathcal{L}(\Z_{K'}, \ML{\Both}_{K'}, K')]
 &=\int \mathcal{L}(\Z_{K'},\ML{\Both}_{K'}, K')p_{K'}(\vT_{K'}(\Z))\d\Z_{K'}
\\
 &=\int \mathcal{L}(\Z_{K'},\ML{\Both}_{K'}, K')\tilde{p}_{K'}(\Z_{K'})\d\Z_{K'}\label{eq:a3}
\end{align}
Since Eq.~\eqref{eq:a2} and \eqref{eq:a3} are the same, this concludes
Eq.~\eqref{eq:GFIC-consterror}.

\end{proof}

\begin{proof}[Proof of Proposition~\ref{pro:independence}]
  Proposition~\ref{pro:marginal-posterior} shows that, if $\Z$ is
  non-degenerated,
  \begin{align}
    p(\Z\mid \X,K) &\propto p(\X,\Z\mid\ML{\Both})|\F_{\ML{\Both}}|^{-1/2}
\\
             &\propto \prod_n p(\x_n,\z_n\mid\ML{\Both})|\F_{\ML{\Both}}|^{-1/2N}
  \end{align}
  Since $\log|\F_\Both|= O(1)$, $|\F_\Local|^{-1/2N}$ quickly
  diminishes to $1$ for $N\to\infty$. 
\end{proof}

\begin{proof}[Proof of Proposition~\ref{pro:both-consistency}]
  For technical reasons, we redefine the estimators as follows:
  \begin{align}
    \ML{\Both}&\equiv\argmax_{\Both}g_N(\Both)=\argmax_{\Both}\frac{1}{N}\log p(\X,\Z | \Both),
\\
    \EML{\Both}&\equiv\argmax_{\Both}G_N(\Both)=\argmax_{\Both}\E_q[\frac{1}{N}\log p(\X,\Z | \Both)].
  \end{align}
  According to \ref{asm:regularity}, $g_N(\Both)$ is continuous and
  concave, and it uniformly converges to
  $G_N(\Both)$, i.e.,
  \begin{align}
    \sup_{\Both\in\pspace}|g_N(\Both) - G_N(\Both)|\convp 0.
  \end{align}
This suffices to show
  the consistency (for example, see Theorem 5.7 in~\cite{vaart98}.)
\end{proof}
%

% \begin{proof}[Proof of Proposition~\ref{prop:shrink}]
%%
%  \begin{align*}
%    &\int q(\Z) \log \{\int p(\X,\Z,\Both)\d\Both\} \d\Z + H(q(\Z))
%    \\
%    &=
%    \int q(\Z_{\quot k}) \log \{\int p(\X,\tilde{\Z},\Both)\d\Both\} \d\Z_{\quot k} + H(q(\Z_{\quot k}))
%  \end{align*}
%%
%  where $\tilde{\z}_{\cdot k}=\0$ and $\tilde{\z}_{\cdot l}=\z_{\cdot
%    l}$ for $l\not= k$.  Due to assumption~(?), the above function no
%  longer depends on $\local_k$, and the marginalization w.r.t. $\local_k$
%  is vanished, i.e.,
%%
%  \begin{align*}
%    \int p(\X,\tilde{\Z},\Both)\d\Both 
%    =
%    \int p(\X,\Z_{\quot k},\Both_{\quot k})\d\Both_{\quot k}
%  \end{align*}
%%
% and we obtain the statement.
%\end{proof}
%

%%% Local Variables:
%%% mode: latex
%%% TeX-master: "sup.tex"
%%% End:

\end{document}